\documentclass[twoside]{article}

\usepackage[accepted]{aistats2020}
\usepackage[noend]{algpseudocode}
\usepackage{amsmath}
\usepackage{amssymb}
\usepackage{amsfonts}
\usepackage{fullpage}
\usepackage{float}
\usepackage{tikz}
\usetikzlibrary{bayesnet}
\usetikzlibrary{arrows}
\usepackage{color}
\usepackage{xcolor}
\usepackage{graphicx}
\usepackage{caption}
\usepackage{subcaption}
\usepackage{amsthm}
\usepackage{natbib}
\usepackage{bm}
\usepackage{natbib}
\usepackage{mathtools}
\usepackage{url}
\usepackage{tikzpagenodes}
\bibliographystyle{plainnat}

\usepackage [english]{babel}
\usepackage [autostyle, english = american]{csquotes}
\MakeOuterQuote{"}

\usetikzlibrary{backgrounds}
\newcommand{\bw}{\mathbf{w}}
\newcommand{\by}{\mathbf{y}}
\newcommand{\bz}{\mathbf{z}}
\newcommand{\bxi}{\bm{\xi}}
\newcommand{\btheta}{\bm{\theta}}
\newtheorem{thm}{Theorem}
\definecolor{lasallegreen}{rgb}{0.03, 0.47, 0.19}
\definecolor{jonquil}{rgb}{0.98, 0.85, 0.37}

\setlength{\pdfpageheight}{11in}
\setlength{\pdfpagewidth}{8.5in}

\begin{document}

\twocolumn[

\aistatstitle{Prediction Focused Topic Models via Feature Selection}

\aistatsauthor{ Jason Ren* \And Russell Kunes* \And  Finale Doshi-Velez}
\aistatsaddress{jason\_ren@college.harvard.edu \And rk3064@columbia.edu \And finale@seas.harvard.edu}]

\begin{abstract}
  
Supervised topic models are often sought to balance prediction quality and interpretability.  However, when models are (inevitably) misspecified, standard approaches rarely deliver on both.  We introduce a novel approach, the prediction-focused topic model, that uses the supervisory signal to retain only vocabulary terms that improve, or at least do not hinder, prediction performance.  By removing terms with irrelevant signal, the topic model is able to learn task-relevant, coherent topics.  We demonstrate on several data sets that compared to existing approaches, prediction-focused topic models learn much more coherent topics while maintaining competitive predictions.

\end{abstract}

 \begin{tikzpicture}[remember picture,overlay]

    \node[align=center] at ([xshift=6.5em,yshift=1em]current page text area.south) 
    {*Equal Contribution };
  \end{tikzpicture}%

\section{Introduction}

Topic models such as Latent Dirichlet Allocation (LDA) \citep{Blei2003LatentDA} learn a small set of topics that capture the co-occurrence of terms in discrete count data.  Given the topics, any datum can be represented as a mixture over topics; these mixture weights can then be used as input for some down-stream supervised task. By reducing the dimensionality of the data, topic models help make predictions more interpretable, especially if the topics are themselves interpretable. This makes topic models valuable in contexts such as health care (e.g. \citet{hughes2017predictionanti}) or criminal justice (e.g. \citet{Kuang2017CrimeTM}) where interpretability is important for downstream validation.

However, learning topics that are both interpretable and predictive is tricky, and there generally exists a trade-off between the ability of the topics to predict the target well and explain the count data well. In real world settings in which the data do not truly come from a topic modeling generative process, the signal captured by a standard topic model may not be predictive of the target.  Supervised topic models attempt to mitigate this issue by including the targets during training, but the effect of including the targets is often minimal due to a cardinality mismatch between the data and the targets (\citep{zhang2014supervise}, \citep{halpern2012comparison}).

In this work, we focus on one common reason why topic models fail: documents often contain common, correlated terms that are irrelevant to the task.  For example, a topic model trained on movie reviews may form a topic that assigns high probability to words like "comedy", "action", "character", and "plot," which may be nearly orthogonal to a sentiment label. The existence of features irrelevant to the supervised task complicates optimization of the trade-off between prediction quality and explaining the count data, and also renders the topics less coherent.

To address this issue, we introduce a novel supervised topic model, prediction-focused sLDA (pf-sLDA), that explicitly allows features irrelevant to predicting the response variable to be ignored when learning topics. During training, pf-sLDA simultaneously learns which features are irrelevant and fits a supervised topic model on the relevant features.  It modulates the trade-off between prediction quality and explaining the count data with an interpretable model parameter. We demonstrate that compared to existing approaches, pf-sLDA is able to learn more coherent topics while maintaining competitive prediction accuracy on several data sets.

\section{Related Work}
\label{sec:related_work}

There are two main lines of related work: work to improve prediction quality in supervised topic models, and work to focus topics toward areas of interest.

\textbf{Improving prediction quality.}
Since the original supervised LDA (sLDA) work of \citet{mcauliffe2008supervised}, many works have incorporated information about the prediction target into the topic model training process in different ways to improve prediction quality, including power-sLDA \citep{zhang2014supervise}, med-LDA \citep{zhu2012medlda}, and BP-sLDA \citep{chen2015end}. 

\citet{hughes2017prediction} pointed out a number of shortcomings of these previous methods and introduced a new objective that weighted a combination of the conditional target likelihood and marginal data likelihood: $\lambda \log p(\by|\bw) + \log p(\bw)$. They demonstrated the resulting method, termed prediction-constrained sLDA (pc-sLDA),  achieves better empirical results in optimizing the trade-off between prediction quality and explaining the count data and justify why this is the case.  However, irrelevant but common terms are not handled well in their formulation; their topics are often polluted by irrelevant terms. Our pf-sLDA objective enjoys analogous theoretical properties but effectively removes irrelevant terms, and thus achieves more coherent topics. 

There are some supervised topic modeling approaches that model the existence of some sort of irrelevance in the data.  NUF-sLDA \citep{zhang2014supervise} models entire topics as noise. Topic models that encourage sparsity in topic vocabulary distributions, such as dual sparse topic models \citep{lin2014dual}, reduce the number of features per topic but do not easily allow an irrelevant feature to be ignored across all topics. To our knowledge, our approach is the first supervised topic modeling approach in which the model is able to learn which features are irrelevant to the supervised task and ignore them when fitting the topics.

\textbf{Focused topics.} 
The notion of focusing topics in relevant directions is also present in the unsupervised topic modeling literature.  For example, \citet{Wang2016TargetedTM} focus topics by seeding them with keywords; \citet{Kim2012VariableSF} introduce variable selection for LDA, which models some of the vocabulary as irrelevant.  \cite{Fan2017PriorMS} similarly develop stop-word exclusion schemes. However, these approaches adjust topics based on some general notions of "focused," whereas the goal of our pf-sLDA is to remove terms to explicitly manage a trade-off between prediction quality and explaining the count data.

\section{Background and Notation}

We briefly review supervised Latent Dirichlet Allocation (sLDA) \citep{mcauliffe2008supervised}. sLDA models count data (words) as coming from a mixture of $K$ topics $\{\beta_k\}_{k=1}^K$, where each topic $\beta_k \in \Delta^{|V|-1}$ is a categorical distribution over a vocabulary $V$ of $|V|$ discrete words. The count data are represented as a collection of $M$ documents, with each document $\bw_d \in \mathbb{N}^{|V|}$ being a vector of counts over the vocabulary. Each document $\bw_d$ is associated with a target $y_d$. Additionally, each document has an associated topic distribution $\theta_d \in \Delta^{K-1}$, which generates both the words and the target. 

The generative process is:
\begin{algorithmic}
\For{each document}
\State Draw topic distribution $\theta \sim \text{Dir}(\alpha)$
\For{each word}
\State Draw topic $z \sim \text{Cat}(\theta)$ 
\State Draw word $w \sim \text{Cat}(\beta_z)$
\EndFor
\State Draw target $y \sim \text{GLM}(\theta; \eta, \delta)$
\EndFor
\end{algorithmic}

\begin{table}[t]
    \centering
    \begin{tabular}{|c|c|}
          \hline
          Symbol & Meaning  \\ 
          \hline
          $K$ & Number of topics\\
          $M$ & Number of documents\\
          $V$ & Vocabulary \\
          $\bw$ &  Documents (bag of words) \\
          $\by$ & Targets \\ 
          $\beta$ & Topics \\
          $\btheta$ & Topic-document distributions\\
          $\alpha$ & Topic prior parameter \\
          $\eta, \delta$ & Target GLM parameters \\
          $\bxi$ & pf-sLDA channel switches  \\
          $p$ & pf-sLDA word inclusion prior \\ 
          $\pi$ & pf-sLDA additional topic  \\
          \hline
    \end{tabular}
    \caption{Notation}
    \label{tab:my_label}
\end{table}

\section{Prediction Focused Topic Models}
\label{sec:model}
\begin{figure*}[h!]
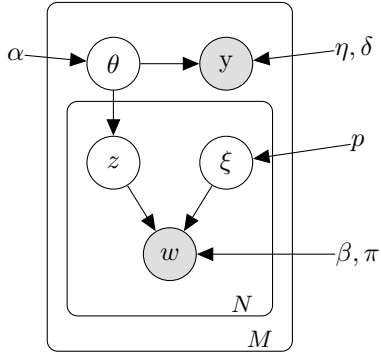

\begin{subfigure}{.5\textwidth}
  \centering
 \tikz{
 \node[obs] (w) {$w$};%
 \node[latent,above=of w,xshift=-0.75cm,fill,yshift=-0.5cm] (z) {$z$}; %
 \node[latent,above=of w,xshift=0.75cm,yshift=-0.5cm] (xi) {$\xi$}; %
 \node[latent, above=of z, yshift=-0.4cm](theta){$\theta$};
 \node[const, above=of z,xshift = -1.3cm](alpha){$\alpha$};
 \node[obs, above= of xi, yshift=-0.4cm](y){y};
 \node[const, above=of xi,xshift = 1.7cm](ed){$\eta, \delta$};
 \node[const, xshift = 2.5cm] (betapi) {$\beta, \pi$};
 \node[const, above = of w, xshift=2.5cm](p) {$p$};
 \plate [inner sep=.25cm,yshift=.2cm] {plate1} {(z)(xi)(w)} {$N$}; %
 \plate [inner sep = .25cm,yshift=.2cm]
 {plate2} {(theta)(z)(xi)(w)(plate1)} {$M$};
 \edge {z,xi,betapi} {w} 
 \edge {p}{xi}
 \edge {theta}{z,y}
 \edge {alpha}{theta}
 \edge {ed}{y}
 }

  \label{fig:sub1}
\end{subfigure}%
\begin{subfigure}{.5\textwidth}
  \centering
  \begin{algorithmic}
\For{each document}
\State Draw topic distribution $\theta \sim Dir(\alpha)$
\For{each word}
\State Draw topic $z \sim Cat(\theta)$ 
\State Draw switch $\xi \sim Bern(p)$
\If{$\xi = 1$}
\State Draw word $w \sim Cat(\beta_z)$
\ElsIf{$\xi = 0$}
\State Draw word $w \sim Cat(\pi)$
\EndIf
\EndFor
\State Draw target $y \sim GLM(\theta; \eta, \delta)$ 
\State (See Appendix \ref{sec:impl_deets} for details)
\EndFor
\end{algorithmic}
  \label{fig:sub2}
\end{subfigure}
\caption{\textit{Left:} pf-sLDA graphical model. \textit{Right:} pf-sLDA generative process.}
\label{fig:graphical_models}
\end{figure*}

We now introduce prediction-focused sLDA (pf-sLDA), a novel, dual-channel topic model. The fundamental assumption that pf-sLDA builds on is that the vocabulary $V$ can be divided into two disjoint components, one of which is irrelevant to predicting the target variable.  pf-sLDA separates out the words irrelevant to predicting the target, even if they have latent structure, so that the topics can focus on only modeling structure that is relevant to predicting the target. The goal is to learn interpretable, focused topics that result in high prediction quality.

\paragraph{Generative Model.}

The pf-sLDA latent variable model has the following components: one channel of pf-sLDA models the count data as coming from a mixture of $K$ topics $\{\beta_k\}_{k=1}^K$, identical to sLDA. The second channel of pf-sLDA models the data as coming from an additional topic $\pi \in \Delta^{|V|-1}$.  (This modeling choice of a simple second channel is explained later in this section.) The target only depends on the first channel, so the second channel acts as an outlet for words irrelevant to predicting the target. We constrain $\beta$ and $\pi$ such that $\beta_k^\top \pi = 0$ for all $k$, such that each word is always either relevant or irrelevant to predicting the target. Which channel a word comes from is determined by its corresponding Bernoulli switch $\xi$, which has prior $p$. The generative process of pf-sLDA is given in Figure \ref{fig:graphical_models}. In Appendix \ref{sec:lik_lb_deriv}, we prove that a lower bound to the pf-sLDA likelihood is:
\begin{multline}
\log p(\by,\bw) \geq E_{\bxi}[\log p_{\beta}(\by|\bw, \bxi)] + \\ pE_{\btheta}[\log p_{\beta}(\bw|\btheta)] + (1-p)\log p_{\pi}(\bw)
\label{eqn:pfslda_lb}
\end{multline}
Here, the notation emphasizes the fact that the conditional pmf $p_{\beta}$ is free of $\pi$ and the multinomial pmf $p_{\pi}$ is free of $\beta$.

\textbf{Connection to prediction-constrained models.}
 The lower bound above reveals a connection to the pc-sLDA loss function of \citet{hughes2017prediction}. The first two terms capture the trade-off between performing the prediction task $E_{\bxi}[\log p_{\beta}(\by|\bw, \bxi)]$ and explaining the words $pE_{\btheta}[\log p_{\beta}(\bw|\btheta)]$, where the word inclusion prior $p$ is used to down-weight explaining the words (or emphasize the prediction task). This is analogous to the prediction-constrained objective, but we manage the trade-off through an interpretable model parameter, the word inclusion prior $p$, rather than a more arbitrary Lagrange multiplier $\lambda$.  Additionally, unlike pc-sLDA, our formulation is still a valid graphical model, and thus we can perform inference using standard Bayesian techniques. We describe similar connections in the true likelihood, rather than the lower bound, in Appendix \ref{sec:pf-sLDA_true_lik_to_pc}.

\textbf{Choosing a simple second channel.}
 The second term and third term capture the trade-off between how well a word can be explained by the relevant topics $\beta$ versus the additional topic $\pi$, weighted by $p$ and $1-p$ respectively.  This trade-off motivates why it is important to keep the additional topic $\pi$ simple (e.g. a single topic). If we fix a simple form for $\pi$, then $\pi$ will not be able to explain words as well as $\beta$. Thus, there is an inherent cost in considering words irrelevant (i.e. have mass in $\pi$ and not in $\beta$). Words will be considered irrelevant only if including them in the relevant topics $\beta$ hinders the prediction of the target $y$ in a way that outweighs the cost of considering them irrelevant given $p$.  Decreasing $p$ allows the model to emphasize prediction quality, but it also lowers the cost of considering a word irrelevant due to the weighting terms. A more expressive $\pi$ would lower the cost of considering words irrelevant even more and causes the model to put even relevant words into $\pi$.
 
All this together informs how pf-sLDA behaves. The word inclusion prior $p$ and the additional topic $\pi$ have specific interpretations based on generative process: the word inclusion prior $p$ is the prior probability that a word is relevant (has mass in $\beta$), while the additional topic $\pi$ contains the probabilities of each irrelevant word.  However, in the context of the loss, they play the role of governing the trade-off between prediction quality and explaining the count data, \emph{while still keeping the model a standard generative model}. As we decrease $p$, we encourage fewer words to be considered relevant, with the goal of improving prediction quality by ignoring irrelevant words. This allows pf-sLDA to improve prediction quality while maintaining the ability to model relevant words well. We can then tune $p$ with the goal of encouraging the model to consider the truly irrelevant words to be irrelevant without losing relevant signal. (See Figure~\ref{fig:main_res} in the results for a demonstration of this effect.)

\section{Inference}
\label{sec:inference}
Inference in the pf-sLDA framework corresponds to inference in a graphical model, so advances in Bayesian inference can be applied to solve the inference problem. In this work, we take a variational approach.  Our objective is to maximize the evidence lower bound (ELBO), with the constraint that the relevant topics $\beta$ and additional topics $\pi$ have disjoint support. The key difficulty is that of optimizing over the non-convex set $\{\beta, \pi : \beta^\top \pi = \mathbf{0}\}$. We resolve this with a strategic choice of variational family, which results in a straightforward training procedure that does not require any tuning parameters. In section \ref{sec:variational_family}, we discuss model properties that inform our decision of the variational family. In section \ref{sec:elbo}, we examine how this choice of variational family and optimization enforce our desired constraint.

\subsection{Properties and Variational Family}
\label{sec:variational_family}
We first note some properties of pf-sLDA that will help us choose our variational family.
\begin{thm}
If the channel switches $\bxi_d$ and the document topic distribution $\theta_d$ are conditionally independent in the posterior for all documents, then $\beta$ and $\pi$ have disjoint supports over the vocabulary.
\end{thm}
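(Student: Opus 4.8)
The plan is to work one document at a time and show that the hypothesized posterior independence $\bxi_d \perp \theta_d \mid (\bw_d, y_d)$ forces, word by word, the support condition $\beta_k^\top \pi = 0$. First I would write the document posterior with the per-word topics $z$ marginalized out. Since, conditioned on $\theta$, the words are generated independently and the target depends only on $\theta$, the joint factors as
\begin{equation*}
p(\theta,\bxi \mid \bw,y) \propto \underbrace{p(\theta)\,p(y\mid\theta)}_{\text{depends only on }\theta}\;\prod_{n}\big(\theta^\top b_{w_n}\big)^{\xi_n}\,\pi_{w_n}^{\,1-\xi_n}\,p^{\xi_n}(1-p)^{1-\xi_n},
\end{equation*}
where $b_v := (\beta_{1,v},\dots,\beta_{K,v})^\top$ collects the probability of word $v$ under each topic, and I have used $\sum_k \theta_k \pi_{w_n} = \pi_{w_n}$ for the irrelevant channel. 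The only place where $\theta$ and $\bxi$ interact is the product $\prod_n (\theta^\top b_{w_n})^{\xi_n}$.

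Next I would convert the conditional independence into a pointwise functional condition. Comparing two switch configurations that differ only in coordinate $n$ (say $\xi_n=1$ versus $\xi_n=0$, with observed word $w_n=v$), the $\theta$-only factors cancel and the posterior odds reduce to $\frac{p}{1-p}\cdot\frac{\theta^\top b_v}{\pi_v}$. If $\bxi_d$ and $\theta_d$ are independent in the posterior, this ratio must be free of $\theta$ on the posterior support of $\theta$; equivalently $p(\xi_n=1\mid\theta,w_n=v)=\frac{p\,\theta^\top b_v}{p\,\theta^\top b_v+(1-p)\pi_v}$ does not depend on $\theta$. For any word $v$ that \emph{both} channels can emit (i.e. $\pi_v>0$ and $b_v\neq\mathbf 0$), this forces the affine map $\theta\mapsto \theta^\top b_v$ to be constant.

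I would then rule out this possibility to conclude. Because $\theta_d$ has a Dirichlet prior with positive parameters and the likelihood is strictly positive on the interior of $\Delta^{K-1}$, the posterior of $\theta_d$ has full support on the simplex; an affine map $\theta\mapsto \theta^\top b_v$ is constant there precisely when all coordinates of $b_v$ coincide. Hence for each $v$ either $\pi_v=0$, or $b_v=\mathbf 0$, or all topics assign word $v$ the same positive probability. Discarding the last, non-generic coincidence (distinct topics do not agree exactly on a word's probability), we obtain that for every $v$ at most one channel places mass on $v$, i.e. $\beta_{k,v}\,\pi_v=0$ for all $k,v$, which is exactly $\beta_k^\top\pi=0$ for all $k$.

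I expect the main obstacle to be the middle step: cleanly turning the distributional statement ``$\bxi_d\perp\theta_d$ in the posterior'' into the pointwise condition that $\theta\mapsto\theta^\top b_v$ is constant, and applying it only to words feasible in both channels (a flip into an infeasible configuration carries zero probability and imposes no constraint). A secondary subtlety is justifying that the posterior support of $\theta_d$ is all of $\Delta^{K-1}$, so that ``constant on the support'' genuinely pins down $b_v$, and explicitly flagging that the degenerate case in which every topic shares a word's probability is excluded by topic distinctness.
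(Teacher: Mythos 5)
You have a correct proof, and it takes a genuinely different route from the paper's. The paper argues at the level of the joint posterior: writing $p(\xi,\theta\mid\bw,y)\propto f(\theta)\,g(\xi)\prod_n p_\beta(w_n\mid\theta)^{\xi_n}$, it observes that the required factorization into $r(\theta)s(\xi)$ can hold only if every switch posterior is a point mass, and then invokes the backward direction of Theorem 2 (point-mass switch posterior if and only if $\beta^\top\pi=\mathbf{0}$) to conclude disjointness. You bypass Theorem 2 entirely: you localize independence to a single-switch statement, namely that $p(\xi_n=1\mid\theta,\bw,y)=\frac{p\,\theta^\top b_v}{p\,\theta^\top b_v+(1-p)\pi_v}$ must be free of $\theta$ (with $b_v=(\beta_{1,v},\dots,\beta_{K,v})^\top$ in your notation), and for words feasible in both channels this pins the affine map $\theta\mapsto\theta^\top b_v$ to a constant, which you then kill using full support of the $\theta$-posterior on the simplex. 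What each buys: your version is self-contained and makes explicit two points the paper leaves tacit --- that the posterior of $\theta$ genuinely has full support (so ``constant on the support'' forces all coordinates of $b_v$ to coincide) and that flips into zero-probability configurations impose no constraint --- while the paper's version, by routing through the point-mass characterization, obtains as an intermediate fact exactly the degenerate-posterior statement it later uses to motivate the choice of variational family, so its detour is purchased for a reason. One caveat on your genericity step: discarding the case where all $K$ topics assign word $v$ an identical positive probability is a real extra hypothesis absent from the theorem statement --- in that degenerate case the per-word factor reduces to $c^{\xi_n}$, which is free of $\theta$, so posterior independence can hold with overlapping supports and the theorem as literally stated fails. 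Notably, the paper's own proof has the same unpatched hole: its claim that the product can factor ``only if $\xi_n=0$ or $\xi_n=1$ with probability $1$'' is false precisely when $\sum_k\beta_{k,w_n}\theta_k$ is constant in $\theta$. Your explicit flag is therefore, if anything, the more careful treatment; just state topic distinctness on each shared word as a standing assumption rather than waving it off as non-generic.
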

\begin{thm}
$\beta^\top \pi  = \bold{0}$ if and only if there exists an assignment to all channel switches $\bxi^*$ s.t.  $p(\bxi^* |\bw, \by) = 1$ 
\end{thm}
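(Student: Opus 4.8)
The plan is to first translate the algebraic constraint $\beta^\top\pi = \mathbf{0}$ into a statement about supports. Since every entry of each $\beta_k$ and of $\pi$ is nonnegative, $\beta_k^\top\pi = 0$ for all $k$ holds if and only if for every word $v$ either $\pi_v = 0$ or $\beta_{k,v}=0$ for all $k$; equivalently, the support of $\pi$ and the union of the supports of the $\beta_k$ are disjoint. I would record this as a partition of $V$ into \emph{relevant} words $R = \{v : \pi_v = 0\}$ and \emph{irrelevant} words $I = \{v : \beta_{k,v} = 0 \text{ for all } k\}$, with the understanding that every word appearing in the data lies in exactly one of the two sets.

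For the forward direction I would analyze the posterior of a single switch. Marginalizing $\btheta$ and $\bz$ in the generative model, the joint probability of a configuration $\bxi$ factorizes (inside the integral over $\btheta$) into one factor per token, where the factor at position $(d,n)$ equals $p\sum_z \theta_{d,z}\beta_{z,w_{d,n}}$ when $\xi_{d,n}=1$ and $(1-p)\pi_{w_{d,n}}$ when $\xi_{d,n}=0$. Under disjoint supports exactly one of these two factors is nonzero for each observed word: if $w_{d,n}\in R$ then the $\xi=0$ factor vanishes, and if $w_{d,n}\in I$ then the $\xi=1$ factor vanishes for all $z$. Hence the coordinate-wise assignment $\xi^*_{d,n} = \mathbb{1}[w_{d,n}\notin \operatorname{supp}(\pi)]$ is the unique configuration with nonzero joint probability, so $p(\bxi^*\mid\bw,\by)=1$.

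For the converse I would argue the contrapositive: if $\beta^\top\pi\neq\mathbf{0}$ then there is a word $v^*$ with $\pi_{v^*}>0$ and $\beta_{k^*,v^*}>0$ for some $k^*$. Taking any token whose observed word is $v^*$, I would show both switch values yield a strictly positive contribution to the joint, and therefore a posterior switch probability strictly between $0$ and $1$, precluding any point-mass $\bxi^*$. The key quantitative step is to verify positivity of the relevant integral over $\btheta$: because $p(\btheta)>0$ on the interior of the simplex and the GLM factor $p(\by\mid\btheta)$ is everywhere positive, the $\xi=1$ contribution is bounded below by $p\int \theta_{k^*}\beta_{k^*,v^*}\,p(\btheta)p(\by\mid\btheta)\,d\btheta > 0$, while the $\xi=0$ contribution is $(1-p)\pi_{v^*}>0$.

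The main obstacle I anticipate is this converse direction, and specifically the need to control the marginalization over $\btheta$ and $\bz$ rather than reasoning as if a single fixed $\theta$ were observed; the positivity argument above is what makes the claim go through. A secondary point I would flag explicitly is the role of the data: the equivalence as stated requires that every overlapping word actually occur in $\bw$ (equivalently, that the switch set $\bxi$ covers all of $V$), since a word that never appears places no constraint on the posterior over the observed switches. Under the standing assumption that every vocabulary term appears in the corpus, combining the two directions gives the stated biconditional.
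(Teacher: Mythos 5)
Your proof is correct, and your forward direction coincides with the paper's: under disjoint supports each token's switch is determined by whether its word lies in $\operatorname{supp}(\pi)$, so the induced $\bxi^*$ is the unique configuration with positive probability. Where you genuinely diverge is the converse. The paper argues directly from the point-mass hypothesis: $p(\bxi^*\mid\bw,\by)=1$ forces $p(\bw,\by\mid\bxi)=0$ for every $\bxi\neq\bxi^*$, and it then specializes to the configurations differing from $\bxi^*$ in exactly one coordinate, using non-negativity and continuity of the integrand of $\int p(\btheta)\prod_n p_\beta(w_n\mid\btheta)^{\xi_n}p_\pi(w_n)^{1-\xi_n}\,d\btheta$ to conclude that the flipped per-token factor vanishes identically, whence disjointness. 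You instead prove the contrapositive: an overlap word $v^*$ with $\pi_{v^*}>0$ and $\beta_{k^*,v^*}>0$ yields two configurations of positive joint probability, via the lower bound $p\,\beta_{k^*,v^*}\int\theta_{k^*}\,p(\btheta)\,p(\by\mid\btheta)\,d\btheta>0$ together with $(1-p)\pi_{v^*}>0$. Your route is more elementary: you need only positivity of the Dirichlet prior on the interior of the simplex and of the GLM density, whereas the paper relies on the slightly delicate step that a vanishing integral of a non-negative continuous integrand forces the integrand to vanish; the paper's version, in exchange, reads the disjointness off the point mass without having to exhibit explicit competing configurations. One point to tighten in yours: the claim that "both switch values yield a strictly positive contribution" requires fixing the remaining switches at a support-compatible configuration (take $\xi=0$ wherever $\pi_w>0$ and $\xi=1$ otherwise), which exists whenever the data has positive likelihood --- this deserves a sentence, because the per-document integral over $\btheta$ couples all the $\xi=1$ tokens of a document, so positivity is not literally tokenwise. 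Your explicit flagging of the requirement that every vocabulary word occur in the corpus matches the caveat the paper itself invokes, and you correctly locate it in the same direction of the equivalence.
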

See Appendix \ref{sec:thm_proofs} for proofs.

Theorems 1 and 2 tell us that if the posterior distribution of the channel switches $\bxi_d$ is independent of the posterior distribution  of the document topic distribution $\theta_d$ for all documents, then the true relevant topics $\beta$ and additional topic $\pi$ must have disjoint support, and moreover the posterior of the channel switches $\bxi$ is a point mass. This suggests that to enforce that our generative model constraint that $\beta$ and $\pi$ are disjoint, we should choose the variational family such that $\bxi$ and $\btheta$ are independent. 

If $\bxi$ and $\btheta$ are conditionally independent in the posterior, then the posterior can factor as $p(\bxi, \btheta|\by, \bw) = p(\bxi |\by,\bw) p(\btheta | \by, \bw)$. In this case, the posterior for the channel switch of the $n$th word in document $d$, $\xi_{dn}$, has no dependence $d$, which can be seen directly from the graphical model. Thus, choosing $q(\bxi | \varphi)$ to have no dependence on document naturally pushes our assumption into the variational posterior. We specify the full form of our variational family below:
\begin{align*}
    q(\btheta, \bz, \bxi| \phi, \varphi, \gamma) &=  \prod_d q(\theta_d | \gamma_d) \prod_n q(\xi_{dn} |\varphi) q (z_{dn} | \phi_{dn}) \\
    \theta_d | \gamma_d &\sim \text{Dir}(\gamma_d) \\
     z_{dn} | \phi_{dn} &\sim \text{Cat}(\phi_{dn}) \\
     \xi_{dn} | \varphi &\sim \text{Bern}(\varphi_{w_{dn}})
\end{align*}
where $d$ indexes over the documents and $n$ indexes over the words in each document. Our choices for the variational distributions for $\btheta$ and $\bz$ match those of \citet{mcauliffe2008supervised}. We choose $q(\xi_{dn}|\varphi_{w_{dn}})$ to be a Bernoulli probability mass function with parameter $\varphi_{w_{dn}}$ indexed only by the word $w_{dn}$. This distribution acts as a relaxation of a true point mass posterior, allowing us to use gradient information to optimize over $[0,1]$ rather than directly over $\{0,1\}$.  Moreover, this parameterization allows us to naturally use the variational parameter $\varphi$ as a feature selector; low estimated values of $\varphi$ indicate irrelevant words, while high values of $\varphi$ indicate relevant words. We refer to $\varphi$ as our variational feature selector.

\subsection{ELBO and Optimization}
\label{sec:elbo}
Inference in the pf-sLDA corresponds to optimizing the following lower bound: 
\begin{align*}
    \text{ELBO}(\Theta, \Lambda) &= \log p_\Theta(\mathbf{y}, \bw) - \text{KL}(q_\Lambda(\zeta) || p_\Theta(\zeta|\bw, \mathbf{y}))
\end{align*}
where we denote the full set of model parameters as $\Theta$, the full set of variational parameters as $\Lambda$, and our latent variables as $\zeta \coloneqq \{\bxi, \btheta, \bz\}$.

We show how optimization of the ELBO, combined with our choice of variational family, will push $\beta$ and $\pi$ to be disjoint. First, we note that maximizing the ELBO over $\Theta$, $\Lambda$ is equivalent to: 
\begin{align*}
     \max_\Theta \left( \log p_\Theta(\mathbf{y}, \bw) - \min_\Lambda  \text{KL}(q_\Lambda(\zeta) || p_\Theta(\zeta|\bw, \mathbf{y})) \right)
\end{align*}
Therefore, estimating parameters based on the ELBO can be viewed as $\it{penalized}$ maximimum likelihood estimation with penalty: 
$$\text{pen}(\Theta) = \min_\Lambda \text{KL}(q_\Lambda(\zeta) || p_\Theta(\zeta|\bw, \mathbf{y}))$$

In the pf-sLDA model, the KL penalty for the channel switches $\bxi$ is minimized for the subset of the model parameter space $\{\beta, \pi : \beta^\top \pi = \bold{0}\}$. On this set, $\min_\Lambda \text{KL}(q_\varphi(\bxi) || p_\Theta(\bxi|\bw, \mathbf{y})) = 0 $ so long as $q_\varphi(\bxi)$ includes the set of point masses, as shown by Theorems $1$ and $2$. The analogous penalty everywhere else in the parameter space is greater than zero, based on the contrapositive of Theorem 1. We note that the size of this penalty and whether it is strong enough to enforce a disjoint solution for $\{\beta,\pi\}$ has to do with how restrictive the choice of variational family $q_\Lambda(\zeta)$ is. In our case, we found that our parameterization of $q_\Lambda(\zeta)$ resulted in estimates with $\beta$ and $\pi$ disjoint as desired.

Finally, to optimize the ELBO, we simply run mini-batch gradient descent on the ELBO (full form specified in Appendix \ref{sec:full_elbo}). We also explored using more traditional closed form coordinate ascent updates (see Appendix \ref{sec:caupdates}), but we found the gradient descent approach was computationally much faster and had very similar quality results.

\subsection{Prediction for new documents}
At test time, we make predictions given the learned GLM parameters and the MAP estimate of the topic distribution $\theta$ for the new document (estimated through gradient descent on the posterior for $\theta$).


\begin{figure*}[t]
\begin{subfigure}{.5\textwidth}
  \centering
  \begin{tabular}{|p{2cm}||p{1.5cm}|p{1.5cm}|}
 \hline
\multicolumn{3}{|c|}{Pang and Lee Movie Reviews} \\
 \hline
  Model & Coherence & RMSE  \\

\hline
 sLDA & 0.362 (0.101) & 1.682 (0.021) \\
 \hline
pc-sLDA & 0.492 (0.130) & \textbf{1.298} (0.015) \\
 \hline 
 pf-sLDA & \textbf{1.987} (0.092) & 1.305 (0.024) \\
 \hline 
 \hline
 \multicolumn{3}{|c|}{Yelp Reviews} \\
 \hline
 Model & Coherence & RMSE  \\

\hline
  sLDA & 0.848 (0.086) & 1.162 (0.017) \\
 \hline
pc-sLDA & 1.080 (0.213) & 0.953 (0.004)\\
 \hline 
 pf-sLDA & \textbf{3.258} (0.102) & \textbf{0.952} (0.011) \\
 \hline 
 \hline
  \multicolumn{3}{|c|}{ASD Dataset} \\
  \hline
  Model & Coherence & AUC  \\
\hline
 sLDA & 1.412 (0.113) & 0.590 (0.013)\\
 \hline
pc-sLDA & 2.178 (0.141)  & 0.701 (0.015) \\
\hline
pf-sLDA & \textbf{2.639} (0.091) & \textbf{0.748}  (0.013)\\
 \hline 

\end{tabular}

  \label{fig:sub1}
\end{subfigure}
\begin{subfigure}{.5\textwidth}
  \includegraphics[width=78mm]{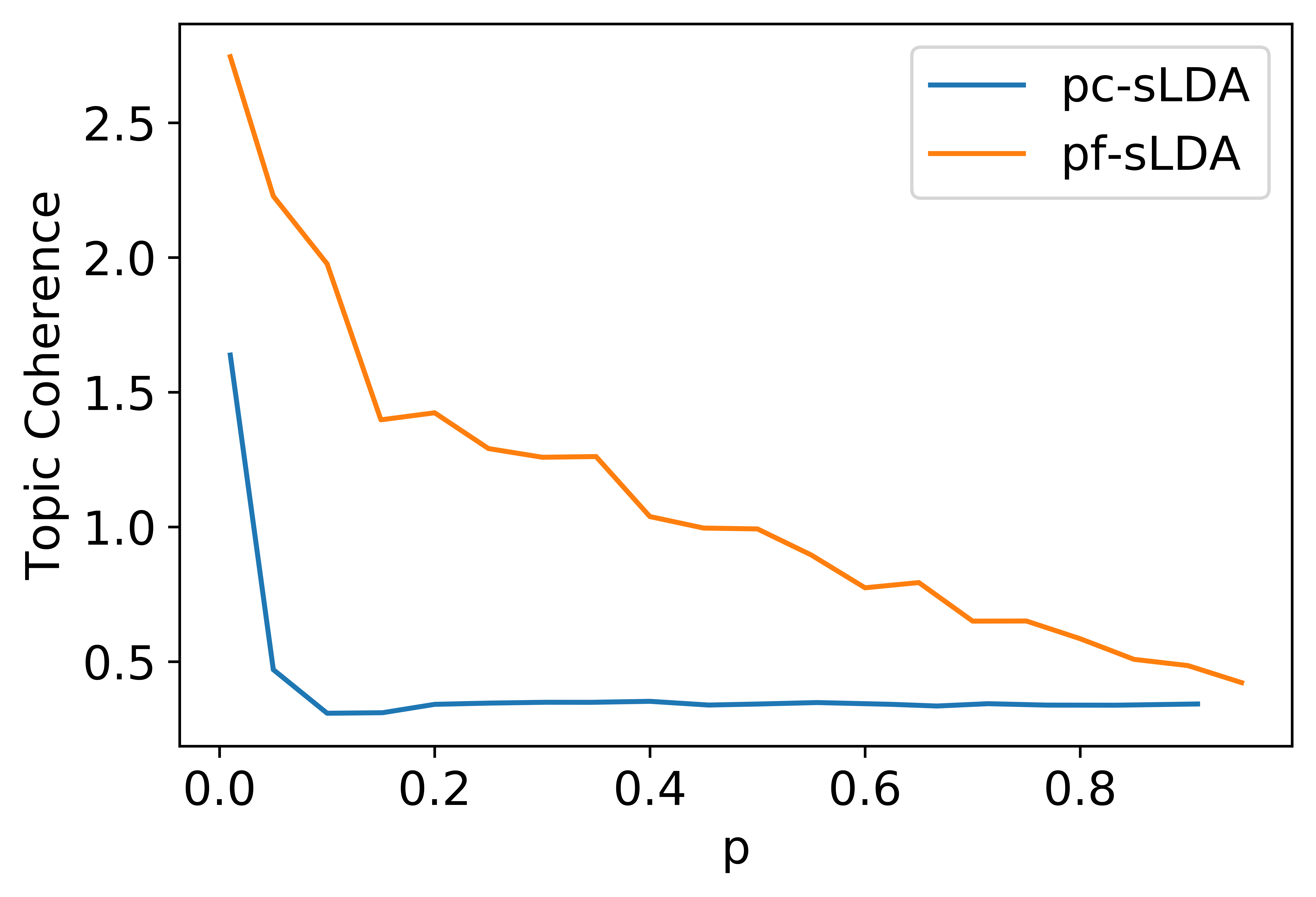}
  \includegraphics[width=78mm]{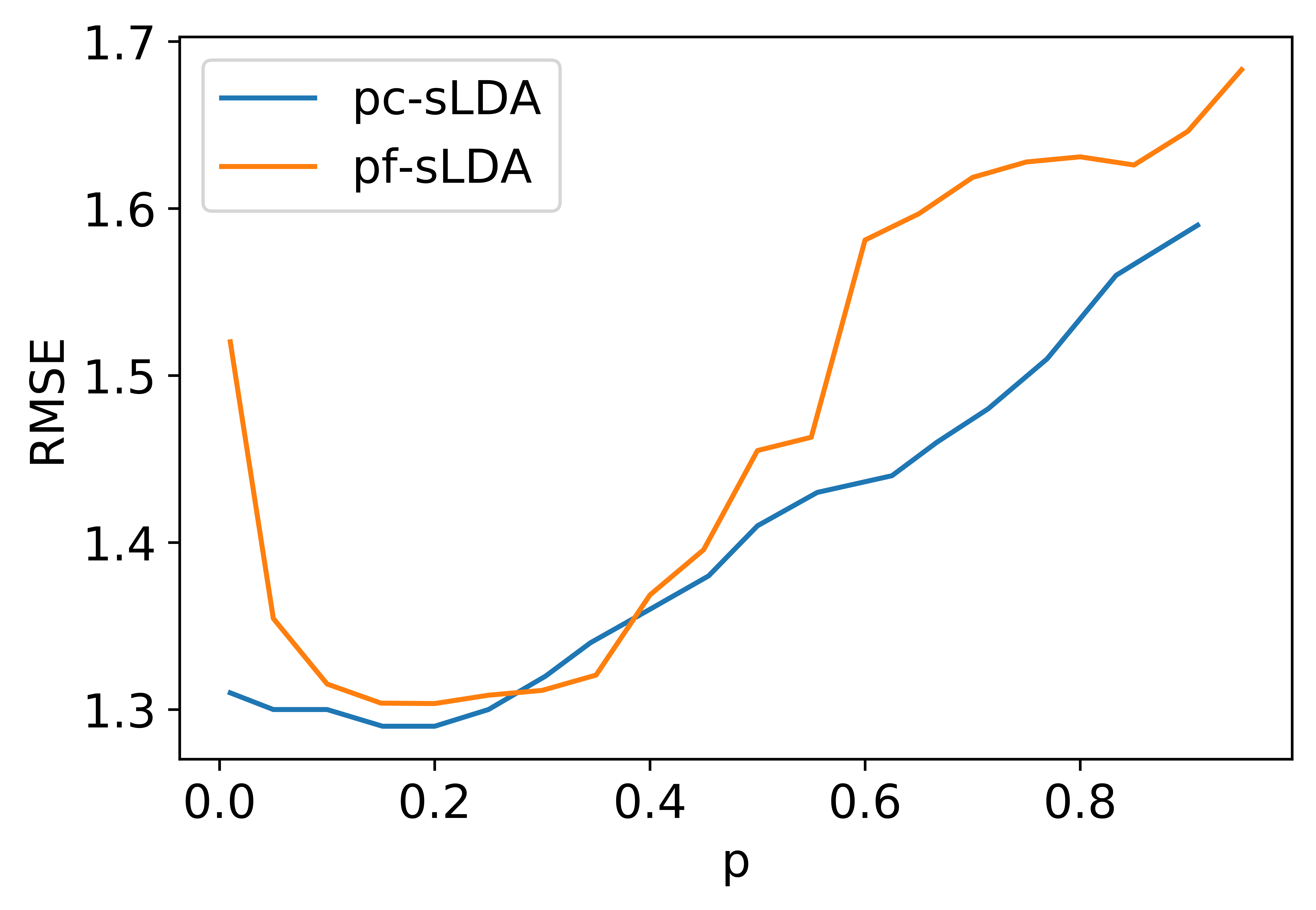}
\end{subfigure}

\caption{\textit{Left:} Mean and (SD) across 5 runs for topic coherence (higher is better) and RMSE (lower is better) or AUC (higher is better) on held-out test sets. Final models were chosen based on a combination of validation coherence and RMSE/AUC. pf-sLDA produces topics with much higher coherence across all three data sets, while maintaining similar prediction performance.  \textit{Right:} Validation coherence and RMSE on Pang and Lee Movie Review data set as $p = \frac 1 \lambda$ varies. This demonstrates the effect of the channel word inclusion prior $p$ controlling the trade-off between prediction quality and explaining the count data.}
\label{fig:main_res}
\end{figure*}

\begin{table*}[h!]
\centering
\begin{tabular}{|p{1cm}||p{4.5cm}|p{4.5cm}|p{4.5cm}|}
 \hline
\multicolumn{4}{|c|}{Pang and Lee Movie Reviews} \\
 \hline
     & sLDA & pc-sLDA & pf-sLDA\\
 \hline
 High & motion, way, love, performance, \textcolor{lasallegreen}{best}, picture, films, 
character, characters, life &
\textcolor{lasallegreen}{best}, little, time, \textcolor{lasallegreen}{good}, don, picture, year, rated, films
just & 
\textcolor{lasallegreen}{brilliant}, \textcolor{lasallegreen}{rare}, \textcolor{lasallegreen}{perfectly}, true, \textcolor{lasallegreen}{oscar}, documentary, \textcolor{lasallegreen}{wonderful}, 
\textcolor{lasallegreen}{fascinating}, \textcolor{lasallegreen}{perfect}, \textcolor{lasallegreen}{best} \\
\hline \hline

Low & plot, time, \textcolor{red}{bad},  \textcolor{lasallegreen}{funny},  \textcolor{lasallegreen}{good},  \textcolor{lasallegreen}{humor}, little, isn, action & script, year, little,  \textcolor{lasallegreen}{good}, don, look, rated, picture, just, films 
& \textcolor{red}{awful}, \textcolor{red}{stupid}, gags, \textcolor{red}{dumb}, \textcolor{red}{dull}, sequel, \textcolor{red}{flat}, \textcolor{red}{worse}, \textcolor{red}{ridiculous}, \textcolor{red}{bad} \\
\hline \hline

\multicolumn{4}{|c|}{Yelp Reviews} \\
 \hline
     & sLDA & pc-sLDA & pf-sLDA \\
 \hline
 High &fries, \textcolor{lasallegreen}{fresh}, burger, try, cheese, really, pizza, place, \textcolor{lasallegreen}{like}, \textcolor{lasallegreen}{good} &
\textcolor{lasallegreen}{best}, just, \textcolor{lasallegreen}{amazing}, \textcolor{lasallegreen}{love}, \textcolor{lasallegreen}{good}, food, service, place, time, \textcolor{lasallegreen}{great} 
& 
\textcolor{lasallegreen}{fantastic}, \textcolor{lasallegreen}{loved}, highly, \textcolor{lasallegreen}{fun}, \textcolor{lasallegreen}{excellent}, \textcolor{lasallegreen}{awesome}, atmosphere, \textcolor{lasallegreen}{amazing}, \textcolor{lasallegreen}{delicious}, \textcolor{lasallegreen}{great} \\

\hline \hline

Low & store, time, want, going, place, know, people, don, \textcolor{lasallegreen}{like}, just
& 
didn, don, said, told, like, place, time, just, service, food 
&  \textcolor{red}{awful}, management,  \textcolor{red}{dirty},  \textcolor{red}{poor},  \textcolor{red}{horrible},  \textcolor{red}{worst},  \textcolor{red}{rude},
 \textcolor{red}{terrible}, money,  \textcolor{red}{bad} \\

\hline \hline
\multicolumn{4}{|c|}{ASD} \\
 \hline
     & sLDA & pc-sLDA & pf-sLDA \\
 \hline
High & Intellect disability  & Infantile cerebral palsy & \textcolor{lasallegreen}{Other convulsions} \\
& Esophageal reflux &  Congenital quadriplegia & Aphasia \\
& Hearing loss & Esophageal reflux & \textcolor{lasallegreen}{Convulsions} \\
& Development delay & fascia Muscle/ligament dis & Central hearing loss \\
& Downs syndrome & Feeding problem &  \textcolor{lasallegreen}{Grand mal status}\\
\hline 
Low & Otitis media & Accommodative esotropia & Autistic disorder \\
& Asthma & Joint pain-ankle & Diabetes Type 1 c0375114\\
& Downs syndrome & Congenital factor VIII & Other symbolic dysfunc \\
& Scoliosis & Fragile X syndrome & Diabetes Type 1 c0375116 \\
& Constipation& Pain in limb & Diabetes Type 2 \\
\hline

\end{tabular}

\caption{ We list the most probable words in the topics with the highest and lowest regression coefficient for each model and data set. In the context of each data set, words expected to be in a high-regression-coefficient topic are listed in green, and words expected to be in a low-regression-coefficient topic are listed in red. These colors are done by hand and are only meant for ease of evaluation. It is clear that the topics learned by pf-sLDA are the most coherent and contain the most words with task relevance.}

\label{tab:topics}
\end{table*}

\section{Experimental Results} 
In this section, we demonstrate the ability of pf-sLDA to find more coherent topics while maintaining competitive prediction quality when compared to pc-sLDA. We also demonstrate in the well-specified case, pf-sLDA is able to reliably and accurately recover the relevant features. Finally, we show the feature filtering of pf-sLDA outperforms naive feature filtering based on correlation to target.

\subsection{Experimental Set-Up}
\textbf{Metrics.}
We wish to assess prediction quality and interpretability of learned topics. To measure prediction quality, we use RMSE for real targets and AUC for binary targets. To measure interpretability of topics, we use normalized pointwise mutual information coherence, as it was found by \citet{Newman2010AutomaticEO} to be the that metric most consistently and closely matches human judgement in evaluating interpretability of topics. See Appendix \ref{sec:coherence_deets} for an explicit procedure on how we calculate coherence.

\textbf{Baselines.}
The recent work in \cite{hughes2017prediction} demonstrates that pc-sLDA outperforms other supervised topic modeling approaches, so we use pc-sLDA as our main baseline. We also include standard sLDA \citep{mcauliffe2008supervised} for reference.

\textbf{Data Sets}.
 We run our model and baselines on three real data sets:
 \begin{itemize}
    \item Pang and Lee's movie review data set \citep{Pang2005SeeingSE}: 5006 movie reviews, with integer ratings from $1$ (worst) to $10$ (best) as targets.
    \item Yelp business reviews \citep{Yelp:2019}: 10,000 business reviews, with integer stars from $1$ (worst) to $5$ (best) as targets.
    \item Electronic health records (EHR) data set of patients with Autism Spectrum Disorder (ASD), introduced in \cite{Masood2018APV}: 3804 EHRs, with binary indicator of epilepsy as target.
\end{itemize}
For more details about data sets, see Appendix \ref{sec:dataset_deets}.

\textbf{Implementation details}. Refer to Appendix \ref{sec:impl_deets}

\begin{figure*}[h!]
\begin{subfigure}{.55\textwidth}
\begin{tabular}{|p{3cm}||p{2cm}|p{2cm}|}
 \hline
\multicolumn{3}{|c|}{Pang and Lee Movie Reviews} \\
 \hline
  Filtering Method & Coherence & RMSE  \\
 \hline
 p = 10, pf-sLDA  & 1.916 (0.105) & 1.418 (0.024)\\
\hline
 p = 10, correlation & 1.115 (0.092) & 1.729 (0.031) \\
\hline
 p = 15, pf-sLDA & 1.509 (0.077) & 1.313 (0.008) \\
\hline
p = 15, correlation  & 0.875 (0.088) & 1.694 (0.017) \\
\hline
\end{tabular}
\end{subfigure}
\begin{subfigure}{.45\textwidth}
\includegraphics[width=70mm]{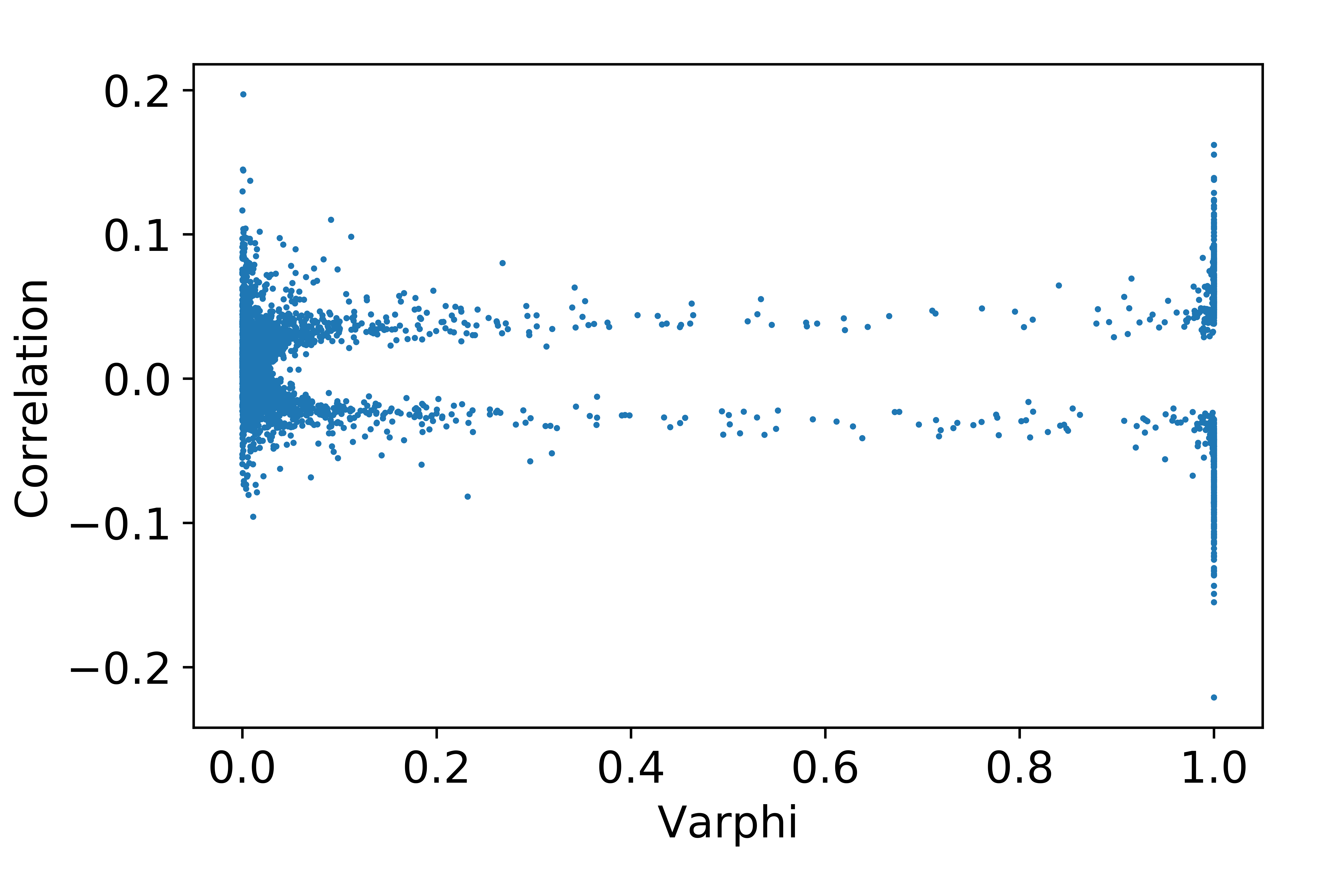}
\end{subfigure}
\caption{\textit{Left:} Mean and (SD) across 5 runs for topic coherence (higher is better) and RMSE (lower is better) for sLDA on a filtered vocabulary on a held-out test set. Filtering by pf-sLDA as compared to correlation results better coherence and RMSE. \textit{Right:} Each dot represents one word in the vocabulary. We plot the variational feature selector $\varphi$ after training vs. the correlation to the target for each word}
\label{fig:filter}
\end{figure*}

\begin{table}[h!]
\begin{center}
 \begin{tabular}{|p{1cm}||p{2.5cm}|p{2.5cm}|}
 \hline
\multicolumn{3}{|c|}{Synthetic Data Set, true p = 0.25} \\
 \hline
  p & Precision & Recall  \\

\hline
 0.15 & 1.000 (0.000) & 0.430 (0.013) \\
 \hline
 0.25 & 0.981 (0.006) & 0.962 (0.006) \\
 \hline 
 0.35 & 0.885 (0.008) & 0.982 (0.006) \\
 \hline 
 \hline
\end{tabular}
\end{center}
\caption{Mean and (SD) of precision and recall of relevant features being considered relevant by pf-sLDA with word inclusion prior $p$ on a synthetic data set. In the well specified case, pf-sLDA is able to recover the relevant features with high precision and recall.} 
\label{tab:synthetic}
\end{table}

\subsection{Results}
\textbf{pf-sLDA learns the most coherent topics.} 
Across data sets, pf-sLDA learns the most coherent topics by far (see Figure \ref{fig:main_res}). pc-sLDA improves on topic coherence compared to sLDA, but cannot match the performance of pf-sLDA. Qualitative examination of the topics in Table \ref{tab:topics} supports the claim that the pf-sLDA topics are more coherent, more interpretable, and more focused on the supervised task. 

Furthermore, if we filter the learned topics of either sLDA or pc-sLDA post hoc based on the variational feature selectors of a trained pf-sLDA model, the topics (and coherence scores) resemble the original pf-sLDA's. This supports the notion that pf-sLDA is focusing on relevant signal.

\textbf{Prediction quality of pf-sLDA remains competitive.}
pf-sLDA produces similar prediction quality compared to pc-sLDA across data sets (see Figure \ref{fig:main_res}). Both pc-sLDA and pf-sLDA outperform sLDA in prediction quality. In the best performing models of pf-sLDA for all 3 data sets, generally between $5\%$ and $20\%$ of the words were considered relevant. Considering both more words or less words relevant hurt performance, as seen in the plots in Figure \ref{fig:main_res}.

\textbf{In the well-specified case, pf-sLDA can recovers relevant features with high precision and recall.}
Our real world examples demonstrate that pf-sLDA achieves better topic coherency with similar prediction performance to state-of-the-art approaches in supervised topic modeling. To test how well pf-sLDA removes irrelevant features, we now turn to a simulated setting in which words are generated from the pf-sLDA generative process with $p=0.25$. (See details on simulation setting in Appendix \ref{sec:dataset_deets}.)

We run pf-sLDA with random initializations 10 times on each of the simulated data sets. We report mean and standard deviation of precision and recall of relevant features being labeled as relevant by pf-sLDA. We consider pf-sLDA labelling a feature as relevant if the corresponding $\varphi$ is greater than $0.99$.

Under the well specified case when we have the correct word inclusion prior $p$, we see that pf-sLDA is able to correctly identify which features are relevant with high precision and recall (see Table \ref{tab:synthetic}). We get results as expected if we misspecify $p$. Precision increases and recall decreases with lower $p$, since this encourages the model to believe fewer of the words are actually relevant. Precision decreases and recall increases with higher $p$, since this encourages the model to believe more of the words are actually relevant. Furthermore, the model is able to reproduce high precision and recall consistently across random initialization, as seen by the low standard deviations.

\textbf{The variable selection of pf-sLDA outperforms naive variable selection based on correlation to target.} Another question of interest was whether using pf-sLDA as a feature selector is any better than simply retaining words that correlate to the target.  To test this, we consider two different filtering schemes. The first scheme retains words that under pf-sLDA have their variational feature selector $\varphi_v > 0.99$.  The second scheme retains the top-$N$ highest correlated words to the target, where $N$ is chosen to be the same number as the words retained by pf-sLDA. We then run a standard sLDA on the filtered vocabulary. We run this experiment on Pang and Lee's movie review data set.

From Figure \ref{fig:filter}, we see that running sLDA on a filtered vocab based on pf-sLDA produces better results in terms of both topic coherence and RMSE than on a filtered vocab based on correlation. Additionally, it is clear pf-sLDA throws out words that have close to zero correlation with the target. For words that have some correlation, pf-sLDA identifies and retains those that are more useful for prediction.

 \section{Discussion}
The introduction of pf-sLDA was motivated by the observation that the presence of irrelevant features in real world data sets hampers the ability of topic models to recover topics that are simultaneously coherent and relevant for a supervised task. 

\textbf{Learning and Inference.}
From a learning perspective, one of the advantages of our approach compared to prediction constrained training is that pf-sLDA can be specified via a graphical model. Thus, we can enjoy the benefits of a principled loss and graphical model inference. For example, while we used SGD to optimize the ELBO, we could have easily incorporated classic updates for some of the parameters as in \cite{mcauliffe2008supervised}.  The most challenging aspect was maintaining orthogonality between the relevant topics $\beta$ and the additional topic $\pi$. In all the settings we tested, our parameterization of $q(\bxi|\bold{\varphi})$ that removes dependence on document enforced $\beta^\top \pi = \bold{0}$.  Proving properties of this choice and the approach of incorporating model constraints into the variational family and inference is an area for future work.

\textbf{Interpretable Predictions.}
Our experiments show that the topics learned by pf-sLDA are more coherent than other methods, using a metric that aligns closely with human judgement of interpetability (\cite{Newman2010AutomaticEO}). Moreover, qualitative examination of the topics and selected words for a variety of supervised tasks supports this conclusion: topics learned by pf-sLDA are highly coherent while maintaining task relevance.  For example, in a deeper qualitative analysis of the Pang and Lee movie review data set, where targets are integers from 1 to 10 (10 being most positive and 1 being most negative), pf-sLDA produces topics with corresponding regression coefficients close to 10 that are highly positive in sentiment, and topics with coefficients close to 0 that are highly negative in sentiment, while topics with coefficients close to 5 and 6 are more mild in sentiment (see Table \ref{tab:Pang_lee_full_topics} in Appendix \ref{sec:full_PL_topics}). This contrasts with the pc-sLDA and sLDA topics, in which topic sentiments are not particularly clear.

\textbf{Feature Selection and Applications.}  
The pf-sLDA approach is particularly well suited for tasks where it is suspected that there are a large number of features unrelated to a target variable of interest.  For example, beyond sentiment and epilepsy detection, one could imagine pf-sLDA being useful for tasks such as genome wide association studies where a majority of genetic data is irrelevant to a certain disease risk but there exists a latent structure describing a small set of relevant genes. 

In Figure \ref{fig:filter}, we demonstrated the effectiveness of the feature selection of pf-sLDA by showing retaining features based on the variational feature selectors ($\varphi$) of pf-sLDA outperforms simply retaining features with strong bivariate correlation with the target. Intuitively, pf-sLDA preserves co-occurrence relationships between the features that may be useful for a wide range of possible downstream tasks, which selection based solely on the bivariate correlations may ignore.

\section{Conclusion}
In this paper, we introduced prediction-focused supervised LDA, whose feature selection procedure improves both predictive accuracy and topic coherence of supervised topic models.  The model enjoys good theoretical properties, inferential properties, and performed well on real and synthetic examples. Future work could include establishing additional theoretical properties of the pf-sLDA variable selection procedure and applying our trick of managing trade-offs within a graphical model for variable selection in other generative models.

\section*{Acknowledgements}

FDV and JR acknowledge support from NSF CAREER 1750358 and Oracle Labs.  JR also acknowledges support from Harvard's PRISE program over Summer 2019.

\bibliography{refs} 
\bibliographystyle{ieeetr}
\newpage
\onecolumn
\section{Appendix}
\subsection{Code Base}
\url{https://github.com/jasonren12/PredictionFocusedTopicModel}

\setcounter{thm}{0}
\subsection{Theorem Proofs}
\label{sec:thm_proofs}
\begin{thm}
Suppose that the channel switches $\xi_d$ and the document topic distribution $\theta_d$ are conditionally independent in the posterior for all documents, then $\beta$ and $\pi$ have disjoint supports over the vocabulary.
\end{thm}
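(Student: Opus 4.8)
\section*{Proof proposal}

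The plan is to work at the level of a single word token and reduce the posterior conditional independence hypothesis to an explicit algebraic constraint on $\beta$ and $\pi$. First I would write down, for the $n$th token of document $d$, the conditional posterior of its switch given the document topic distribution, $p(\xi_{dn}=1\mid\theta_d,\bw_d,\by_d)$. Reading off the graphical model, conditioning on $\theta_d$ renders $\xi_{dn}$ independent of $\by_d$ (the target depends on the data only through $\theta_d$) and of the other tokens, so this quantity equals $p(\xi_{dn}=1\mid\theta_d,w_{dn})$. Marginalizing out the topic assignment $z_{dn}$ and using $\sum_k\theta_{dk}=1$ gives the closed form
\begin{equation*}
p(\xi_{dn}=1\mid\theta_d,w_{dn}=v)=\frac{p\sum_k\theta_{dk}\beta_{kv}}{p\sum_k\theta_{dk}\beta_{kv}+(1-p)\pi_v}.
\end{equation*}

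Next I would translate the hypothesis into a statement about this expression. Conditional independence of $\bxi_d$ and $\theta_d$ in the posterior is equivalent to $p(\bxi_d\mid\theta_d,\bw_d,\by_d)=p(\bxi_d\mid\bw_d,\by_d)$ for every $\theta_d$ in the posterior support. Since the joint conditional factorizes across tokens given $\theta_d$, marginalizing down to a single token shows each single-token conditional must itself be independent of $\theta_d$; and because the Dirichlet-based posterior of $\theta_d$ has full support on the interior of the simplex, this independence must hold over the whole simplex. Writing $g_v(\theta_d):=\sum_k\theta_{dk}\beta_{kv}$, the display above is a strictly increasing function of $g_v$ whenever $\pi_v>0$, so constancy in $\theta_d$ forces $g_v$ itself to be constant on the simplex. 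Because $g_v$ is affine in $\theta_d$, evaluating at the vertices $e_k$ shows that $g_v$ is constant if and only if $\beta_{1v}=\cdots=\beta_{Kv}$.

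Finally I would assemble the dichotomy: for each vocabulary word $v$, either $\pi_v=0$, or $\beta_{kv}$ takes the same value for every topic $k$. To conclude disjoint support, $\beta_k^\top\pi=\mathbf{0}$, I need that whenever $\pi_v>0$ the common value $\beta_{kv}$ is in fact $0$. This last step is where I expect the main obstacle: the monotonicity argument rules out a word appearing with \emph{unequal} probabilities across topics while also living in $\pi$, but it does not by itself exclude the degenerate configuration in which $v$ occurs with one identical positive probability in every topic and also carries mass in $\pi$. I would handle this by invoking a mild genericity/identifiability condition on the true topics (no word is distributed identically across all $K$ topics), under which $\beta_{1v}=\cdots=\beta_{Kv}$ together with $\pi_v>0$ cannot hold for a positive common value, forcing $\beta_{kv}=0$ for all $k$ and hence $\beta_k^\top\pi=0$. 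Arguing that this degenerate case is genuinely excluded, rather than merely measure-zero, is the delicate part; the remainder is the routine Bayes-rule computation and the affineness observation.
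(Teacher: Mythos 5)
Your proposal is correct in substance but takes a genuinely different route from the paper. The paper works with the joint posterior: it writes $p(\xi,\theta \mid \bw, y) \propto f(\theta)\,g(\xi)\prod_n p_\beta(w_n\mid\theta)^{\xi_n}$, argues that this product can split into a form $r(\theta)s(\xi)$ only if every $\xi_n$ is deterministic under the posterior, and then invokes its Theorem 2 (the posterior on $\bxi$ is a point mass if and only if $\beta^\top\pi=\mathbf{0}$) to conclude. You instead compute the per-token conditional $p(\xi_{dn}=1\mid\theta_d,w_{dn}=v)$ in closed form, use strict monotonicity in the affine function $g_v(\theta)=\sum_k\theta_{dk}\beta_{kv}$ when $\pi_v>0$, and extract the dichotomy ``$\pi_v=0$ or $\beta_{1v}=\cdots=\beta_{Kv}$'' directly, with no appeal to Theorem 2. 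The paper's route is shorter given Theorem 2 and handles the whole switch vector at once; yours is self-contained and makes explicit exactly where the implication is tight. Two minor caveats, both shared with the paper: the argument only constrains words actually observed in the corpus (the paper concedes this in its proof of Theorem 2), and your monotonicity step needs $0<p<1$.

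The ``obstacle'' you flag is genuine, and --- notably --- it is an unstated gap in the paper's own proof as well. If $\beta_{kv}=c>0$ for every $k$ and $\pi_v>0$, then $p_\beta(v\mid\theta)=c$ is free of $\theta$, so the factor $I(\xi_n=0)+I(\xi_n=1)\sum_k\beta_{k,w_n}\theta_k$ can be absorbed into $s(\xi)$ without forcing $\xi_n$ to be deterministic; the paper's claim that factorization requires $\xi_n\in\{0,1\}$ with probability one silently assumes $\sum_k\beta_{k,w_n}\theta_k$ is nonconstant in $\theta$. In that degenerate configuration the posterior of $(\bxi,\theta)$ really does factorize while $\beta^\top\pi\neq\mathbf{0}$, so the theorem as literally stated needs the non-degeneracy hypothesis you propose, and your explicit genericity patch is a point of rigor rather than a weakness. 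Just phrase it carefully: require that no word have the same \emph{strictly positive} probability under all $K$ topics --- your wording ``no word is distributed identically across all topics'' would also forbid the all-zeros case, which is precisely the configuration the conclusion $\beta_k^\top\pi=0$ requires you to allow.
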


\begin{proof} For simplicity of notation, we assume a single document and hence drop the subscripts on $\xi_d$ and $\theta_d$. All of the arguments are the same in the multi-document case.
If $\xi$ and $\theta$ are conditionally independent in the posterior, then we can factor the posterior as follows: $p(\xi, \theta | \bw, y) = p(\xi | \bw, y) p(\theta | \bw, y)$. We expand out the posterior:
\begin{align*}
    p(\xi, \theta | \bw, y) &\propto p(\xi) p(\theta) p(\bw, y | \theta, \xi) \\
    &\propto p(\xi) p(\theta) p(y | \theta) \prod_n p_\beta(w_n | \theta) ^{\xi_n} p_\pi(w_n)^{1 - \xi_n} \\
    &= f(\theta) g(\xi) \prod_n p_\beta(w_n | \theta) ^{\xi_n}
\end{align*}

for some functions $f$ and $g$. Thus we see that we must have that $ \prod_n p_\beta(w_n | \theta) ^{\xi_n}$  factors into some $r(\theta) s(\xi)$. We expand $\prod_n p_\beta(w_n | \theta) ^{\xi_n}$:
\begin{align*}
    p_\beta(w_n | \theta)^{\xi_n} &= \left( \sum_k \beta_{k, w_n} \theta_k\right)^{\xi_n} \\
    &= I(\xi_n = 0) + I(\xi_n = 1) \left( \sum_k \beta_{k, w_n} \theta_k\right) 
\end{align*}
So that we can express the product as:
\begin{align*}
    \prod_n p_\beta(w_n | \theta)^{\xi_n} &= \prod_n \left\{ I(\xi_n = 0) + I(\xi_n = 1) \left( \sum_k \beta_{k, w_n} \theta_k\right)\right\}
\end{align*}
In order to further simplify, let $\beta_0 = \{n : \sum_k \beta_{k, w_n} = 0\}$ and $\beta_{>} = \{n : \sum_k \beta_{k, w_n} > 0\}$. In other words $\beta_0$ is the set of $n$ such that the word $w_n$ is not supported by $\beta$, and $\beta_{>}$ is the set of $n$ such that the word $w_n$ is supported by $\beta$.

We can rewrite the above as:
\begin{align*}
    \prod_n p & _\beta(w_n | \theta)^{\xi_n} = \left(\prod_{n \in \beta_0} I(\xi_n = 0) \right) \left(\prod_{n \in \beta_>} \left\{ I(\xi_n = 0) +  I(\xi_n = 1)  \sum_k \beta_{k, w_n} \theta_k \right\} \right)
\end{align*}

Thus, we see that we can factor $\prod_n p_\beta(w_n | \theta) ^{\xi_n}$ as a function of $\theta$ and $\xi$ into the form $r(\theta) s(\xi)$ only if $\xi_n = 0$ or $\xi_n =1$ with probability $1$. We can check that this implies $\beta_k^\top \pi = 0$ for each $k$ by the result of Theorem $2$. 
\end{proof}

\begin{thm}
$\beta^\top \pi  = 0$ if and only if there exists a $\bxi^*$ s.t.  $p(\bxi^* |\bw, y) = 1$ 
\end{thm}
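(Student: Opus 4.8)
The plan is to characterize exactly when the joint $p(\bxi, \bw, y)$ vanishes as a function of $\bxi$, since a point-mass posterior $p(\bxi^*|\bw,y)=1$ is precisely the statement that $p(\bxi,\bw,y)=0$ for every $\bxi\neq\bxi^*$ while $p(\bxi^*,\bw,y)>0$. Following the single-document convention of the Theorem 1 proof, I would first marginalize out $\bz$ and expand
\begin{align*}
p(\bxi, \bw, y) &\propto p(\bxi)\left(\prod_n p_\pi(w_n)^{1-\xi_n}\right)\int p(\theta)\,p(y|\theta)\prod_n p_\beta(w_n|\theta)^{\xi_n}\,d\theta,
\end{align*}
and read off the two ways a factor can force this to zero: the choice $\xi_n=0$ contributes $p_\pi(w_n)=\pi_{w_n}$, which vanishes exactly when $w_n\notin\mathrm{supp}(\pi)$; and the choice $\xi_n=1$ contributes $p_\beta(w_n|\theta)=\sum_k\beta_{k,w_n}\theta_k$, which is identically zero in $\theta$ (killing the integral) exactly when $w_n\notin\mathrm{supp}(\beta)$, where $\mathrm{supp}(\beta)=\{v:\sum_k\beta_{k,v}>0\}$. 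Thus each observed word $w_n$ admits $\xi_n=1$ only if it lies in the support of $\beta$, and $\xi_n=0$ only if it lies in the support of $\pi$.

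For the forward direction, assume $\beta^\top\pi=\mathbf{0}$, i.e. $\mathrm{supp}(\pi)$ and $\mathrm{supp}(\beta)$ are disjoint. Then every observed word lies in exactly one of the two supports, so for each position $n$ precisely one value of $\xi_n$ avoids a vanishing factor. This singles out a unique configuration $\bxi^*$; every other $\bxi$ gives $p(\bxi,\bw,y)=0$, while $p(\bxi^*,\bw,y)>0$ since the observed data is generatable under the model. Hence $p(\bxi^*|\bw,y)=1$.

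For the converse I would argue by contraposition. If $\beta^\top\pi\neq\mathbf{0}$ there is a word $v$ with $\pi_v>0$ and $\beta_{k,v}>0$ for some $k$. Whenever $v$ appears at a position $n$, both $\xi_n=0$ and $\xi_n=1$ yield strictly positive joint probability (via the $\pi$-factor $\pi_v$ and the $\beta$-integral $\int p(\theta)p(y|\theta)\theta_k\cdots>0$ respectively, with the remaining switches fixed to any consistent assignment). The posterior therefore charges at least two distinct configurations and cannot be a point mass.

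The main obstacle is this converse's reliance on the problematic word $v$ actually occurring in $\bw$: a point-mass posterior for one specific document that happens to omit $v$ need not force disjoint support. I would resolve this by reading the statement over the data distribution — Theorem 1 assumes conditional independence for \emph{all} documents, so it suffices to exhibit a single document containing $v$, on which non-disjoint support provably breaks the point-mass property, contradicting the hypothesis. I should also verify the positivity claims underlying both directions (that $p(\bxi^*,\bw,y)>0$ and that each surviving single-factor term is genuinely nonzero), which reduce to the Dirichlet prior and the GLM likelihood being strictly positive on the interior of the simplex.
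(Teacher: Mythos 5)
Your proposal is correct and takes essentially the same approach as the paper: the same factorization conditional on $\bxi$ with positivity of the Dirichlet prior and GLM likelihood doing the work, the same identification of the unique consistent $\bxi^*$ in the forward direction, and a converse that is simply the contrapositive of the paper's argument via single-position flips of $\bxi^*$. The gap you flag is real but is the same one the paper acknowledges explicitly, resolving it exactly as you suggest — by the "minor assumption" that every vocabulary word is observed in the data, which holds because the vocabulary is constructed from the training set.
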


\begin{proof}

\begin{enumerate}
    \item Assume $\beta^\top \pi  = 0$. Then, conditional on $w_n$, $\xi_n = 1$ with probability 1 if $\pi_{w_n} = 0$ and  $\xi_n = 0$ with probability 1 if $\pi_{w_n} > 0$. So we have $p(\bxi^* |\bw, y) = 1$ for the $\bxi^*$ corresponding to $\bw$ as described before.
    \item Assume there exists a $\bxi^*$ s.t.  $p(\bxi^* |\bw, y) = 1$.
    
    Then we have:
    \begin{align*}
        p(\bxi^* | \bw, y) = \frac{p(\bw, y | \bxi^*) p(\bxi^*)}{\sum_{\bxi} p(\bw, y | \bxi) p(\bxi)} = 1 \\
        p(\bw, y | \bxi^*) p(\bxi^*) = \sum_{\bxi} p(\bw, y | \bxi) p(\bxi)
    \end{align*}
    
    This implies $p(\bw, y | \bxi) p(\bxi) = 0 \ \forall\  \bxi \neq \bxi^*$, which implies $p(\bw, y | \bxi) = 0 \ \forall\ \bxi \neq \bxi^*$
    
    Then we have:
    \begin{align*}
        p(\bw, y | \bxi) &= p(y |\bw, \bxi) p(\bw| \bxi) \\ 
        &= \left(\int_\theta p(y | \theta) p(\theta | \bw, \bxi) d\theta \right)\left( \int_\theta p(\bw | \theta, \bxi) p(\theta) d\theta \right)
    \end{align*}
    
    The first term will be greater than 0 because $y|\theta$ is distributed Normal. We focus on the second term.
    
    \begin{align*}
         \int_\theta p(\bw | \theta, \bxi) p(\theta) d\theta = \int_\theta p(\theta) \prod_n p_\beta(w_n | \theta) ^{\xi_n} p_\pi(w_n)^{1 - \xi_n}d\theta
    \end{align*}
    
    Let $X$ be the set of $\bxi$ that differ from $\bxi^*$ in one and only one position, i.e. $\xi_n = \xi^*_n$ for all $n \in \{1,\dots N\}\setminus \{i\}$ and $\xi_i \neq \xi^*_i$. For each $\xi \in X$, $\int_\theta d\theta p(\theta) \prod_n p_\beta(w_n | \theta) ^{\xi_n} p_\pi(w_n)^{1 - \xi_n} = 0$. Since all functions in the integrand are non-negative and continuous, $p_\beta(w_n | \theta) ^{\xi_n} p_\pi(w_n)^{1 - \xi_n} = 0$ for all $\theta$ for the unique $i$ with $\xi_i \neq \xi_i^*$. Since this holds for every element of $X$, we must have that $p_\beta(w_n | \theta) = 0$ for all $\xi_n = 0$ and $p_\pi(w_n) = 0$ for all $\xi_n = 1$, proving $\beta$ and $\pi$ are disjoint, provided the minor assumption that all words in the vocabulary $w_n$ are observed in the data. In practice all words are observed in the vocabulary because we choose the vocabulary based on the training set.
\end{enumerate}
\end{proof}

\subsection{ELBO (per doc)}
\label{sec:full_elbo}
Let $\Lambda= \{ \alpha, \beta, \eta, \delta, \pi, p \} $. Omitting variational parameters for simplicity:
\begin{align*}
    \log p(\bw,\by | \Lambda) &= \log \int_\theta \sum_z \sum_\xi p(\theta, \bz, \bxi, \bw, \by | \Lambda) d\theta \\
    &= \log E_q \left(\frac{p(\theta, \bz, \bxi, \bw, \by | \Lambda)} {q(\theta, \bz, \bxi)}\right) \\
    &\geq E_q[\log p(\theta, \bz, \bxi, \bw, \by)] - E_q[q(\theta, \bz, \bxi)]
\end{align*}

Let $ELBO = E_q[\log p(\theta, \bz, \bxi, \bw, \by | \Lambda)] - E_q[q(\theta, \bz, \bxi)]$

Expanding this:
\begin{align*}
    ELBO &= E_q[\log p(\theta | \alpha)] + E_q[\log p(\bz|\theta)] + E_q[\log p(y|\theta, \eta, \delta)]\\
    &+ E_q[\log p(\bxi| p)] + E_q[\log p(\bw | \bz, \beta, \bxi, \pi)] \\
    &- E_q[\log q(\theta | \gamma)] - E_q[\log q(\bz | \phi)] - E_q[\log q(\bxi | \varphi)]
\end{align*}
The distributions of each of the variables under the generative model are:
\begin{align*}
    &\theta_d \sim \text{Dirichlet}(\alpha)\\
    &z_{dn} |\theta_d \sim \text{Categorical}(\theta_d)\\
    &\xi_{dn} \sim \text{Bernoulli}(p)\\
    &w_{dn}|z_{dn},\xi_{dn} = 1 \sim \text{categorical}(\beta_{z_{dn}})\\
    &w_{dn}|z_{dn},\xi_{dn} = 0 \sim \text{Categorical}(\pi)\\
    &y_d|\theta_d \sim \text{GLM}(\theta ; \eta, \delta)
\end{align*}
Under the variational posterior, we use the following distributions:
\begin{align*}
    &\theta_d \sim \text{Dirichlet}(\gamma_d)\\
    &z_{dn} \sim \text{Categorical}(\phi_{dn})\\
    &\xi_{dn} \sim \text{Bernoulli}(\varphi_{w_{dn}})
\end{align*}
    
This leads to the following ELBO terms:
\begin{align*}
    E_q[\log p(\theta | \alpha)] &=  \log \Gamma\left(\sum_k \alpha_k\right) - \sum_k \log \Gamma(\alpha_k) 
    + \sum_k(\alpha_k - 1)E_q[\log \theta_k] \\
    E_q[\log p(\bz | \theta)] &= \sum_n \sum_k \phi_{nk}E_q[\log \theta_k] \\
    E_q[\log p(\bw | \bz, \beta, \bxi, \pi)] &= \sum_n \left(\sum_v w_{nv} \varphi_v \right) *\left(\sum_k \sum_v \phi_{nk} w_{nv} \log \beta_{kv}\right) + \left(1 - \left(\sum_v w_{nv} \varphi_v \right)\right) \left(\sum_v w_{nv} \log \pi_v \right) \\
    E_q[\log p(\bxi| p)] &= \sum_n \left(\sum_v w_{nv} \varphi_v \right) \log p  + \left(1 - \left(\sum_v w_{nv} \varphi_v \right)\right)\log (1-p) \\
    E_q[q(\theta | \gamma)] &= \log \Gamma\left(\sum_k \gamma_k\right) - \sum_k \log \Gamma(\gamma_k) + \sum_k(\gamma_k - 1)E_q[\log \theta_k] \\
    E_q[q(\bz | \phi)] &= \sum_n \sum_k \phi_{nk} \log \phi_{nk} \\
    E_q[q(\bxi| \varphi)] &= \sum_n \left(\sum_v w_{nv} \varphi_v \right) \log \left(\sum_v w_{nv} \varphi_v \right) + \left(1 - \left(\sum_v w_{nv} \varphi_v \right)\right) \log \left(1 - \left(\sum_v w_{nv} \varphi_v \right)\right) \\
    E_q[\log p(y|\theta, \eta, \delta)] &= \frac{1}{2}\log 2\pi \delta - \frac{1}{2\delta}\left(y^2 -2y\eta^\top E_q[\theta] + \eta^\top E_q[\theta \theta^\top]\eta \right)
\end{align*}

Other useful terms:
\begin{align*}
E_q[\log \theta_k] &= \Psi(\gamma_k) - \Psi\left(\sum_{j=1}^K\gamma_{j}\right) \\
\bar{Z} &:= \frac {\sum_n \xi_n z_n} {\sum_n \xi_n} \in \mathbb{R}^K \\
E_q[\theta] &= \frac {\gamma} {\gamma^\top \bold{1}} \\
\gamma_0 &:= \sum_k \gamma_k \\
\tilde{\gamma}_j &:= \frac {\gamma_j} {\sum_k \gamma_k} \\
E_q[\theta \theta^\top]_{ij} &= \frac{\tilde{\gamma}_i (\delta(i,j) - \tilde{\gamma}_j)}{\gamma_0 + 1} + \tilde{\gamma}_i\tilde{\gamma}_j
\end{align*}


\subsection{Lower Bounds on the Log Likelihood}
\label{sec:lik_lb_deriv}
Remark that the likelihood for the words of one document can be written as follows:     

\[
p(\bw) = \int_{\theta}d\theta p(\theta|\alpha) \left\{\prod_{n=1}^N [p*p_{\beta}(w_n|\theta)+ (1-p)p_{\pi}(w_n)] \right\} 
\]

We would like to derive a lower bound to the joint log likelihood $p(y,\bw)$ of one document that resembles the prediction constrained log likelihood since they exhibit similar empirical behavior. Write $p(y,\bw)$ as $E_{\xi}[p(y|\bw,\xi)p(\bw|\xi)]$ and apply Jensen's inequality:

\[
\log p(y,\bw) \geq E_{\xi}[\log p(y|\bw,\xi)] + E_{\xi}[\log p(\bw|\xi)]
\]
Focusing on the second term we have:
\[
\log p(\bw|\xi) = \log \int_{\theta} d\theta p(\theta|\alpha) \prod_{n=1}^N p_{\beta}(w_n|\theta)^{\xi_n}p_{\pi}(w_n)^{1-\xi_n}
\]
Applying Jensen's inequality again to push the $\log$ further inside the integrals:
\[
\log p(\bw|\xi) \geq \int_{\theta}d\theta p(\theta|\alpha) \Bigg\{ \sum_{i=1}^N\xi_n \log p_{\beta}(w_n|\theta) + \sum_{n=1}^N(1-\xi_n)\log p_{\pi}(w_n) \Bigg\}
\]

Note that $\theta$ and $\xi$ are independent, so we have:

\[
\log p(y,\bw) \geq E[\log p(y|\bw,\xi)] + E\left[\sum_{i=1}^N\xi_n \log p_{\beta}(w_n|\theta) + \sum_{n=1}^N(1-\xi_n)\log p_{\pi}(w_n)\right]
\]

where the expectation is taken over the $\xi$ and $\theta$ priors. This gives the final bound:

\[
\log p(y,\bw) \geq E[\log p_{\beta}(y|W_1(\xi))] + pE[\log p_{\beta}(\bw|\theta)] + (1-p)\log p_{\pi}(\bw)
\]

We have used the substitution: $p(y|\bw,\xi) = p_{\beta}(y|W_1(\xi))$. Conditioning on $\xi$, $y$ is independent from the set of $w_n$ with $\xi_n = 0$, so we denote $W_1(\xi)$ as the set of $w_n$ with $\xi_n = 1$. It is also clear that $p(y|W_1(\xi), \xi) = p_\beta(y|W_1(\xi))$. By linearity of expectation, this bound can easily be extended to all documents.

Note that this bound is undefined on the constrained parameter space: $\beta^\top \pi = 0$; if $p\neq 0$ and $p\neq 1$. This is clear because $\log p_{\pi}(\bw)$ or $\log p_{\beta}(w_n|\theta)$ is undefined with probability 1. We can also see this directly, since $p(y,\bw|\xi)$ is non-zero for exactly one value of $\xi$ so $E[\log p(y,\bw|\xi)]$ is clearly undefined. We derive a tighter bound for this particular case as follows. Define $\xi^*(\pi, \beta, \bw)$ as the unique $\xi$ such that $p(\bw|\xi)$ is non-zero. We can write $p(y,\bw) = p(y,W | \xi^*(\pi, \beta, \bw)) p(\xi^*(\pi, \beta, \bw))$. For simplicity, I use the notation $\xi^*$ but keep in mind that it's value is determined by $\beta$, $\pi$ and $\bw$. Also remark that the posterior of $\xi$ is a point mass as $\xi^*$. If we repeat the analysis above we get the bound:
\[
\log p(y,\bw) \geq p_\beta(y| W_1(\xi^*)) + E\left[\sum_{n=1}^N\xi^*p_\beta(w_n|\theta)\right] + \sum_{n=1}^N(1-\xi^*)\log p_{\pi}(w_n) + p(\xi^*)
\]
which is to be optimized over $\beta$ and $\pi$. Note that the $p(\xi^*)$ term is necessary because of its dependence on $\beta$ and $\pi$.  
 Comparing this objective to our ELBO, we make a number of points. The true posterior is $\xi^*$ which would ordinarily require a combinatorial optimization to estimate; however we introduce the continuous variational approximation $\xi \sim Bern(\varphi)$. Note that the true posterior is a special case of our variational posterior (when $\varphi =1$ or $\varphi = 0$). Since the parameterization is differentiable, it allows us to estimate $\xi^*$ via gradient descent. Moreover, the parameterization encourages $\beta$ and $\pi$ to be disjoint without explicitly searching over the constrained space. Empirically, the estimated set of $\varphi$ are correct in simulations, and correct given the learned $\beta$ and $\pi$ on real data examples.

\subsection{Implementation details}
\label{sec:impl_deets}
In general, we treat $\alpha$ (the prior for the document topic distribution) as fixed (to a vector of ones). We tune pc-SLDA using \cite{hughes2017prediction}'s code base, which does a small grid search over relevant parameters. We tune sLDA and pf-sLDA using our own implementation and SGD. $\beta$ and $\pi$ are initialized with small, random (exponential) noise to break symmetry. We optimize using ADAM with initial step size 0.025.

We model real targets as coming from $N(\eta^\top \theta, \delta)$ and binary targets as coming from Bern$(\sigma(\eta^\top  \theta))$
\subsection{pf-sLDA likelihood and prediction constrained training.}
\label{sec:pf-sLDA_true_lik_to_pc}
The pf-sLDA marginal likelihood for one document and target can be written as:
\begin{align*}
    p(\bw, y) &= p(y | \bw) \int_\theta \sum_{\xi} p(\bw, \theta, \xi)\\ 
    &= p(y | \bw) \int_\theta p(\theta | \alpha) \prod_n\big\{ p * p_\beta(w_n | \theta, \xi_n = 1, \beta) + (1-p) * p_\pi(w_n | \xi_n = 0, \pi)\big\}
\end{align*}
where $n$ indexes over the words in the document. We see there still exist the $p(y|\bw)$ and $p * p_\beta(\bw)$ that are analagous to the prediction constrained objective, though the precise form is not as clear.

\subsection{Data set details}
\label{sec:dataset_deets}
\begin{itemize}
     \item Pang and Lee's movie review data set \citep{Pang2005SeeingSE}: There are 5006 documents. Each document represents a movie review, and the documents are stored as bag of words and split into 3754/626/626 for train/val/test. After removing stop words and words appearing in more than $50\% $ of the reviews or less than $10$ reviews, we get $|V| = 4596$. The target is an integer rating from $1$ (worst) to $10$ (best).
     \item Yelp business reviews \citep{Yelp:2019}: We use a subset of 10,000 documents from the Yelp 2019 Data set challenge . Each document represents a business review, and the documents are stored as bag of words and split into 7500/1250/1250 for train/val/test. After removing stop words and words appearing in more than $50\% $ of the reviews or less than $10$ reviews, we get $|V| = 4142$. The target is an integer star rating from $1$ to $5$.

     \item Electronic health records (EHR) data set of patients with Autism Spectrum Disorder (ASD), introduced in \cite{Masood2018APV}: There are 3804 documents. Each document represents the EHR of one patient, and the features are possible diagnoses. The documents are split into 3423/381 for train/val, with $|V| = 3600$. The target is a binary indicator of presence of epilepsy.
     \item Synthetic: We simulate a set of 5 data sets. Each data set is generated based on the pf-sLDA generative process with $\beta$ and $\pi$ random, each with mass on $50$ features. This means there are 100 total features, 50 relevant and 50 irrelevant. Other relevant parameters are $K=5$, $\alpha = \bold{1}$, and $p = 0.25$. We use these data sets to test the effectiveness and reliability of the feature filtering of pf-sLDA under the well-specified case when we have ground-truth.
 \end{itemize}

\subsection{Full Pang and Lee Movie Review Topics}

See Table below.

\label{sec:full_PL_topics}
  \begin{table*}[t!]
   \begin{center}
\begin{tabular}{|p{0.4cm}||p{4cm}|p{4.3cm}|p{4.6cm}|}
 \hline
     & sLDA & pc-sLDA, $\lambda=10$ & pf-sLDA, $p=0.10$ \\
 \hline
 1 & motion, way, love, performance, \textcolor{lasallegreen}{best}, picture, films, 
character, characters, life &
\textcolor{lasallegreen}{best}, little, time, \textcolor{lasallegreen}{good}, don, picture, year, rated, films
just & 
wars, \textcolor{lasallegreen}{emotionally}, allows, \textcolor{lasallegreen}{academy}, perspective, tragedy, today, 
\textcolor{lasallegreen}{important}, \textcolor{lasallegreen}{oscar}, \textcolor{lasallegreen}{powerful}   \\
\hline 
 $\eta_1$ &  7.801 & 8.287 & 10.253 \\
\hline \hline
 2&  kind, \textcolor{red}{poor}, \textcolor{lasallegreen}{enjoyable}, picture, \textcolor{lasallegreen}{excellent}, money, look, don
films, year
& little, just, good, doesn, life, way, films, character, time, characters
& \textcolor{jonquil}{complex}, study, \textcolor{lasallegreen}{emotions}, \textcolor{lasallegreen}{rare}, \textcolor{lasallegreen}{perfectly}, \textcolor{lasallegreen}{wonderful}, \textcolor{lasallegreen}{unique}, power, \textcolor{lasallegreen}{fascinating}, \textcolor{lasallegreen}{perfect} \\
\hline 
 $\eta_2$ &  5.994 & 8.127 & 8.792 \\
\hline \hline
 3 & running, subject, 20, character, message, characters, minutes
just, make, time & country, king, stone, dark, political, parker, mood, modern, dance, noir
& jokes, idea, wasn, \textcolor{jonquil}{silly}, \textcolor{red}{predictable}, \textcolor{jonquil}{acceptable}, \textcolor{red}{unfortunately}, \textcolor{red}{tries},  \textcolor{lasallegreen}{nice}, problem \\
\hline 
 $\eta_3$ &  5.735 & 4.407 & 6.258 \\
\hline \hline
 4 & \textcolor{jonquil}{acceptable}, language, teenagers, does, make, good, sex
, violence, rated, just & subscribe, room, jane, \textcolor{red}{disappointment}, michel, screening, primarily, reply, \textcolor{red}{frustrating}, plenty
& \textcolor{red}{tedious}, \textcolor{red}{poorly}, horror, \textcolor{red}{dull}, \textcolor{jonquil}{acceptable}, parody, \textcolor{red}{worse}, \textcolor{red}{ridiculous}, \textcolor{red}{supposed}, \textcolor{red}{bad} \\
\hline 
 $\eta_4$ &  5.064 & 3.440 & 2.657 \\
\hline \hline
5 & plot, time, \textcolor{red}{bad},  \textcolor{lasallegreen}{funny},  \textcolor{lasallegreen}{good},  \textcolor{lasallegreen}{humor}, little, isn, action & script, year, little,  \textcolor{lasallegreen}{good}, don, look, rated, picture, just, films 
& suppose, \textcolor{red}{lame}, \textcolor{red}{annoying}, \textcolor{red}{attempts}, \textcolor{red}{failed}, \textcolor{red}{attempt}, \textcolor{red}{boring}, \textcolor{red}{awful}, \textcolor{red}{dumb}, \textcolor{red}{flat}, 
comedy
\\
\hline 
 $\eta_5$ &  3.516 & 2.802 & 0.135 \\
\hline \hline
\end{tabular}
\end{center}
\caption{ We list the top 10 most likely words for each topic for the models specified. The topics are organized from highest to lowest with respect to it's corresponding coefficient for the supervised task. In the context of movie reviews, positive-sentiment words are listed in green, negative-sentiment words are listed in red, and sometimes positive, sometimes negative or neutral, but sentiment-related words are listed in yellow. Non-sentiment related words are listed in black.}
\label{tab:Pang_lee_full_topics}
\end{table*}

\subsection{Coherence details}
\label{sec:coherence_deets}
We calculate coherence for each topic by taking the top $N$ most likely words for the topic, calculating the pointwise mutual information for each possible pair, and averaging. These terms are defined below.
\begin{align*}
    \text{coherence} &= \frac{1}{N (N-1)} \sum_{w_i,w_j \in \text{TopN}} \text{pmi}(w_i, w_j) \\ 
    \text{pmi}(w_i, w_j) &= \log \frac {p(w_i) p(w_j)}{p(w_i, w_j)} \\ 
    p(w_i) &= \frac{\sum_d I(w_i \in \text{doc d})} M \\
    p(w_i, w_j) &= \frac{\sum_d I(w_i \text{ and } w_j \in \text{doc d})} M
\end{align*} 
where $M$ is the total number of documents and $N$ is the number of top words in a topic. The final coherence we report for a model is the average of all the topic coherences.

The numbers in the paper are reported with $N=50$, but we found that the general trends (i.e. pf-sLDA topics most coherent) were consistent across several values of $N$. For example, for the Pang and Lee Movie Review Dataset, we have the following coherences:

 \begin{table*}[hbt]
   \begin{center}
\begin{tabular}{|p{2cm}||p{1cm}|p{1cm}|p{1cm}|p{1cm}|p{1cm}|}
 \hline
    N & 10 & 20 & 30 & 40 & 50 \\
    \hline
    pc-sLDA & 0.42 & 0.44 & 0.44 & 0.46 & 0.49 \\
    \hline
    pf-sLDA & 1.52 & 1.69 & 1.76 & 1.92 & 1.99 \\
 \hline
 \end{tabular}
 \end{center}
 \end{table*}

\subsection{Coordinate Ascent Updates}
\label{sec:caupdates}
\subsubsection{E Step}

Denote $L_\varphi$ the terms of the ELBO that depend on $\varphi$ and similarly for other parameters.

\subsubsection*{$\varphi$ update}

\begin{align*}
    L_\varphi = &\sum_n \left(\sum_v w_{nv} \varphi_v \right) \left(\sum_k \sum_v \phi_{nk} w_{nv} \log \beta_{kv}\right) + \left(1 - \left(\sum_v w_{nv} \varphi_v \right)\right) \left(\sum_v w_{nv} \log \pi_v \right) \\
    &+  \sum_n \left(\sum_v w_{nv} \varphi_v \right) \log p + \left(1 - \left(\sum_v w_{nv} \varphi_v \right)\right)\log (1-p) \\
    &- \sum_n \left(\sum_v w_{nv} \varphi_v \right) \log \left(\sum_v w_{nv} \varphi_v \right) + \left(1 - \left(\sum_v w_{nv} \varphi_v \right)\right) \log \left(1 - \left(\sum_v w_{nv} \varphi_v \right)\right) \\
\end{align*}

\begin{align*}
    \frac{\partial}{\partial \varphi_j}L_\varphi = &\sum_n w_{nj} \left(\sum_k \sum_v \phi_{nk} w_{nv} \log \beta_{kv}\right) - w_{nj}\left(\sum_v w_{nv} \log \pi_v \right) \\
    &+ \sum_n w_{nj} (\log p - \log (1 - p)) \\
    &- \sum_n w_{nj} \left(\log \left(\sum_v w_{nv} \varphi_v \right) - \log \left(1 - \left(\sum_v w_{nv} \varphi_v \right)\right)\right) \\
    =  &\sum_n w_{nj} \left(\sum_k \phi_{nk} \log \beta_{kj} - \log \pi_j\right) \\
    &+ \sum_n w_{nj} (\log p - \log (1 - p)) \\
    &- \sum_n w_{nj} \left(\log \varphi_j - \log \left(1 - \varphi_j\right)\right)
\end{align*}

Let $$\Omega_j = \sum_n w_{nj} \left(\sum_k \phi_{nk} \log \beta_{kj} - \log \pi_j + \log p - \log (1 - p)\right)$$. 

Setting the gradient to 0 and solving, we get the following update:
$$ \varphi_v \leftarrow \frac{\exp(\Omega_j/\sum_n w_{nj})}{1+ \exp(\Omega_j/\sum_n w_{nj})}$$

This update makes intuitive sense. It looks like the ratio between the probability of $\beta$ and $\pi$ explaining the word, weighted by $p$, normalized, sigmoided to make it a valid probability.



\subsubsection*{$\phi$ update}
Assuming $y$ depends on $\theta$ for now.
\begin{align*}
     L_\phi = &\sum_n \sum_k \phi_{nk} \left(\Psi(\gamma_k) - \Psi\left(\sum_{j=1}^K\gamma_{j}\right)\right) \\
      &+ \sum_n \left(\sum_v w_{nv} \varphi_v \right) \left(\sum_k \sum_v \phi_{nk} w_{nv} \log \beta_{kv}\right) \\
      &- \sum_n \sum_k \phi_{nk} \log \phi_{nk} \\
\end{align*}

Let $w_n = v$ and using Lagrange Multipliers with the constraint $\sum_k \phi_{nk} = 1$:
\begin{align*}
    \frac{\partial}{\partial\phi_{nk}}L_\phi =\Psi(\gamma_k) - \Psi\left(\sum_{j=1}^K\gamma_{j}\right) + \varphi_v\log \beta_{kv} - 1- \log\phi_{nk} + \lambda 
\end{align*}

This gives the update:
$$
\phi_{nk}\propto \beta_{kv}^{\varphi_v}\exp\left(\Psi(\gamma_{k}) - \Psi\left(\sum_{j=1}^K\gamma_{j}\right)\right)
$$

This is the same update as LDA, with $\beta$ weighted by $\varphi$.





\subsubsection*{$\gamma$ Update}
\begin{align*}
    L_\gamma = &\sum_k(\alpha_k - 1)\left(\Psi(\gamma_k) - \Psi\left(\sum_{j=1}^K\gamma_{j}\right)\right) \\
    &+ \sum_n \sum_k \phi_{nk}\left(\Psi(\gamma_k) - \Psi\left(\sum_{j=1}^K\gamma_{j}\right)\right) \\
    &- \log \Gamma\left(\sum_k \gamma_k\right) - \sum_k \log \Gamma(\gamma_k) + \sum_k(\gamma_k - 1)\left(\Psi(\gamma_k) - \Psi\left(\sum_{j=1}^K\gamma_{j}\right)\right) \\
    &- \frac{1}{2\sigma^2}\left(y^2 -2y\eta^\top E_q[\theta] + \eta^\top E_q[\theta \theta^\top]\eta \right)
\end{align*}

The gradient of $\gamma$ consists of two components; the first (for the first three lines) is the same as in LDA: 
$$
\frac {\partial} {\partial \gamma_i} L1_\gamma = \Psi'(\gamma_{i})\left(\alpha_i + \sum_n \phi_{ni} - \gamma_{i}\right) - \Psi'\left(\sum_k \gamma_{k}\right)\sum_k\left(\alpha_k + \sum_n\phi_{nk} - \gamma_{k}\right)
$$

The second term is as follows:

\begin{align*}
\frac{\partial L2_\gamma}{\partial \gamma_i} = &-\frac{1}{2\sigma^2}[-2y_d(\frac{\eta_k}{\gamma_0} - \frac{\sum_{j=1}^K\eta_i\gamma_{dj}}{\gamma_0^2}) + 2\sum_{i\neq k} \eta_i\eta_k\frac{-\gamma_{di} \gamma_0^2(\gamma_0 + 1) + \gamma_{dk}\gamma_{di}(2\gamma_0(\gamma_0 + 1) + \gamma_0^2)}{\gamma_0^4(\gamma_0 + 1)^2}\\
& + \eta_k^2\frac{(1-2\gamma_{dk}) \gamma_0^2(\gamma_0 + 1) + \gamma_{dk}(\gamma_{dk} -1)(2\gamma_0(\gamma_0 + 1) + \gamma_0^2)}{\gamma_0^4(\gamma_0 + 1)^2} + 2\sum_{i\neq k}\eta_k\eta_i \frac{\gamma_i\gamma_0^2 - 2\gamma_0\gamma_i\gamma_k}{\gamma_0^4}\\
&+ \eta_k^2\frac{2\gamma_k\gamma_0^2 - 2\gamma_0\gamma_k^2}{\gamma_0^4}]
\end{align*}

\subsubsection{M Step}
\subsubsection*{$\pi$ Update}
\begin{align*}
    L_\pi = \sum_n \left(1 - \left(\sum_v w_{nv} \varphi_v \right)\right) \left(\sum_v w_{nv} \log \pi_v \right)
\end{align*}
Taking the derivative and using lagrange multipliers with the constraint $\sum_v \pi_v = 1$, we get:
\begin{align*}
    \frac{\partial}{\partial \pi_j}L_\pi &= \sum_n \left( 1 - \sum_v w_{nv} \varphi_v\right) \frac {w_{nj}} {\pi_j} + \lambda \\
    &= \sum_n \left( 1 - \varphi_j\right) \frac {w_{nj}} {\pi_j} + \lambda
\end{align*}

This gives the update: 
$$ \pi_j \propto (1-\varphi_j)\sum_n w_{nj} $$

This is an intuitive update - it is the proportion the word appears, weighted by $\varphi$.


\subsubsection*{$\beta$ update}
\begin{align*}
    L_\beta =  \sum_n \left(\sum_v w_{nv} \varphi_v \right) \left(\sum_k \sum_v \phi_{nk} w_{nv} \log \beta_{kv}\right)
\end{align*}

Adding a Lagrange multiplier for constraint $\sum_v \beta_{kv} = 1$  $\forall k$ and then taking gradient:
\begin{align*}
    \frac{\partial}{\partial \beta_{kj} }L_\beta = \sum_n \frac{\varphi_j \phi_{nk} w_{nj}}{\beta_{kv}} + \lambda
\end{align*}

This gives the update:
$$ 
\beta_{kj}\propto \sum_n \varphi_j \phi_{nk} w_{nj}
$$

This is the same update as sLDA, but weighted by $\varphi$.

\subsection*{$\alpha$ update} This should be exactly the same as in LDA and sLDA via Newton-Raphson. 

\subsubsection*{$\eta$ update}
Same form as sLDA updates, but replace $E[\bar{Z}]$ and $E[\bar{Z} \bar{Z}^\top]$ with $E[\theta]$ and $E[\theta \theta^\top]$. 
\subsubsection*{$\delta$ update}
Same form as sLDA updates, but replace $E[\bar{Z}]$ and $E[\bar{Z} \bar{Z}^\top]$ with $E[\theta]$ and $E[\theta \theta^\top]$.

\end{document}